\newtheorem{theorem}{Theorem}[section]
\newtheorem{lemma}[theorem]{Lemma}
\title{Correlating Variational Autoencoders Natively 
For Multi-View Imputation}
\author{
  Ella S.~C.~Orme \\
  Department of Mathematics\\
  Imperial College London\\
  \texttt{ella.orme18@imperial.ac.uk} 
  % examples of more authors
  \And
   Marina Evangelou \\
  Department of Mathematics\\
  Imperial College London\\
  \texttt{m.evangelou@imperial.ac.uk} \\
  \And
  Ulrich Paquet \\
  African Institute for Mathematical Sciences,
  South Africa \\
  \texttt{ulrich@aims.ac.za} \\
  %\texttt{ulric} \\
  % \AND
  % Coauthor \\
  % Affiliation \\
  % Address \\
  % \texttt{email} \\
  % \And
  % Coauthor \\
  % Affiliation \\
  % Address \\
  % \texttt{email} \\
  % \And
  % Coauthor \\
  % Affiliation \\
  % Address \\
  % \texttt{email} \\
}
\begin{document}

\maketitle

%have just added this for now so there is something to submit tonight, will have another go in the morning slightly fresher
\begin{abstract}
 Multi-view data from the same source often exhibit correlation. This is mirrored in correlation between the latent spaces of separate variational autoencoders (VAEs) trained on each data-view. A multi-view VAE approach is proposed that incorporates a joint prior with a non-zero correlation structure between the latent spaces of the VAEs. By enforcing such correlation structure, more strongly correlated latent spaces are uncovered. Using conditional distributions to move between these latent spaces, missing views can be imputed and used for downstream analysis. Learning this correlation structure involves maintaining validity of the prior distribution, as well as a successful parameterization that allows end-to-end learning. 
\end{abstract}

\section{Introduction} 
Data from multiple sources describing the same subjects arises in a wealth of settings. This can be clinical information of patients alongside genetic information and scan data. Datasets consisting of multiple views are referred to as multi-view or multi-modal data. There are instances where not all views are always available for every realisation. For example, a patient may miss an appointment or machinery may falter, resulting in no reading for a specific view. Missing data results in smaller usable datasets and reduced statistical power with many methods only applicable to full datasets \cite{mv_no_missing} and can result in reduced performance in downstream analysis \cite{missing_problems}. This manuscript presents a multi-view imputation approach, where its aim is to impute the realisations of a missing view by incorporating the information learnt from the other view.

% 
%To impute via an autoencoder, the latent space is  making variational autoencoders a more desirable approach. this a desirable quality for an imputation method. 
%As the aim is to learn the whole latent space and correctly predict missing views for unseen data, a generative model is a more appropriate choice.
The proposed multi-view imputation method, named Joint Prior Variational Autoencoder (JPVAE), is based on variational autoencoders (VAEs) \cite{vae}, with the views connected solely through a joint prior on the VAEs' latent embeddings. Standard autoencoders seek to encode a latent representation of data, and from this encoding reconstruct the original data via a decoder. Multi-view approaches allow the latent representation of a missing view to be obtained, from which the reconstruction can be used as an imputation of the missing view.  Several multi-view imputation approaches exist in the literature based on autoencoders \cite{mv_imputation_rev2}. However, as variational autoencoders learn a continuous embedding, they provide better interpolation of the latent space than standard autoencoders, making them a more suitable approach for an imputation method. 
The proposed joint prior in JPVAE incorporates a non-zero correlation structure that is found to increase the observed correlation between views in the latent spaces. This allows for successful movement between latent spaces, improving the ability to impute missing views.

Various approaches have been proposed extending VAEs to the multi view case, including those by \cite{mvvae1,mv_vae2,mvvae3, JAMIE}, with missing data imputation included as a feature of these methods. Most recently, proposed methods differ via their method of approximating a joint posterior. \citet{mv_vae_limitations} discuss the undesirable upper bound these methods put on a lower limit of the log-likelihood used as the objective function in VAEs.
%See \citet{mv_vae_limitations} for a review on multimodal VAEs and their limitations. 
In contrast to existing approaches, which largely assume some joint posterior and/or shared latent space, our work has the novelty that is based on a joint prior between the latent variables.  To the best of our knowledge, this is the first attempt made to correlate the latent spaces of VAEs natively through a joint prior. %To the best of our knowledge, no such method has been proposed. %This risks noisy, or weakly correlated views, diluting signal, resulting in poor reconstruction and therefore poor imputation. 

%By incorporating correlation into the loss function the correlation between views in the latent space is increased. Whilst there are various types of missing data \cite{mv_imputation_rev1}, this paper concentrates on missing views - where a complete view is missing and only partial data is available for a realisation. 
JPVAE is most similar in structure to the private version of Deep Variational Canonical Correlation Analysis (VCCA-private), a multi-view VAE approach by \citet{vcca}. However their model contains both private and shared variables in the latent space and doesn't have a suitable approach for imputing missing views. JPVAE can also be seen as an application of the ideas from self-supervised learning techniques such as Barlow Twins \citep{barlow}, where the embeddings from noisy versions of the same input are driven to be highly correlated via a loss function.

%Figure \ref{fig:digits_comparison} illustrates an application of the proposed approach, named Joint Prior Variational Autoencoder (JPVAE), on imputing view 2 of the data that corresponds to the bottom half of MNIST digits using the top half of the digits data (view 1). By incorporating a joint prior with a non-zero cross-correlation structure, we observe better quality results in imputing view 2. Through the proposed approach, reconstructions of missing views can be obtained. This is achieved through the conditional distribution between the two latent variables, as illustrated in Figure \ref{fig:cross_pred}.

Figure \ref{fig:digits_comparison} illustrates an application of the proposed approach JPVAE, on imputing view 2 of the data that corresponds to the bottom half of MNIST digits using the top half of the digits data (view 1).  Through the proposed approach, reconstructions of missing views can be obtained. This is achieved through the conditional distribution between the two latent variables, as illustrated in Figure \ref{fig:cross_pred}. By incorporating a joint prior with a non-zero cross-correlation structure, we observe better quality results in imputing view 2. Further realisations of image imputation, including imputation of view 1 from view 2, can be found in Appendix \ref{app:further_res}. 

\begin{figure}[h!]
\centering
     \begin{subfigure}[t]{0.4\textwidth}
         \centering
         \includegraphics[width=\textwidth]{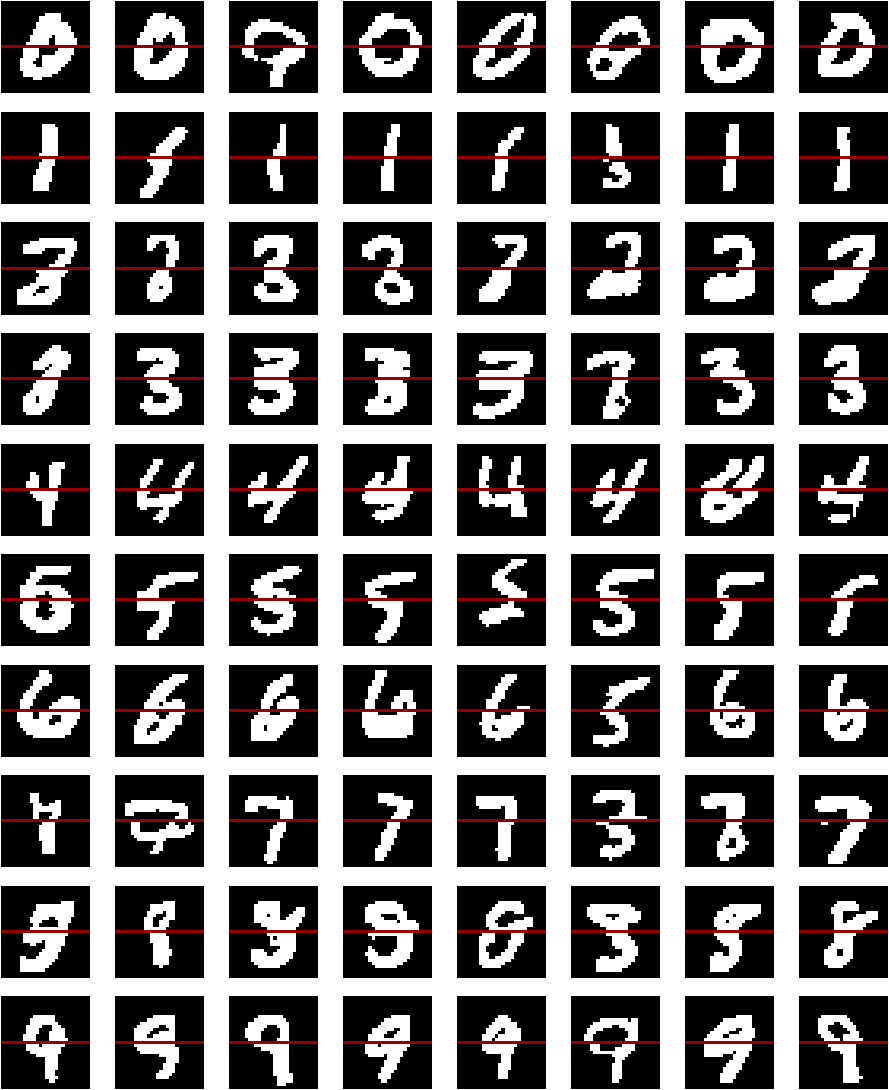}
         \caption{No correlation learnt, but estimated empirically after training}
        \label{fig:no_c_digits}
     \end{subfigure}
     \hspace{2em}
\begin{subfigure}[t]{0.4\textwidth}
         \centering
\includegraphics[width=\textwidth]{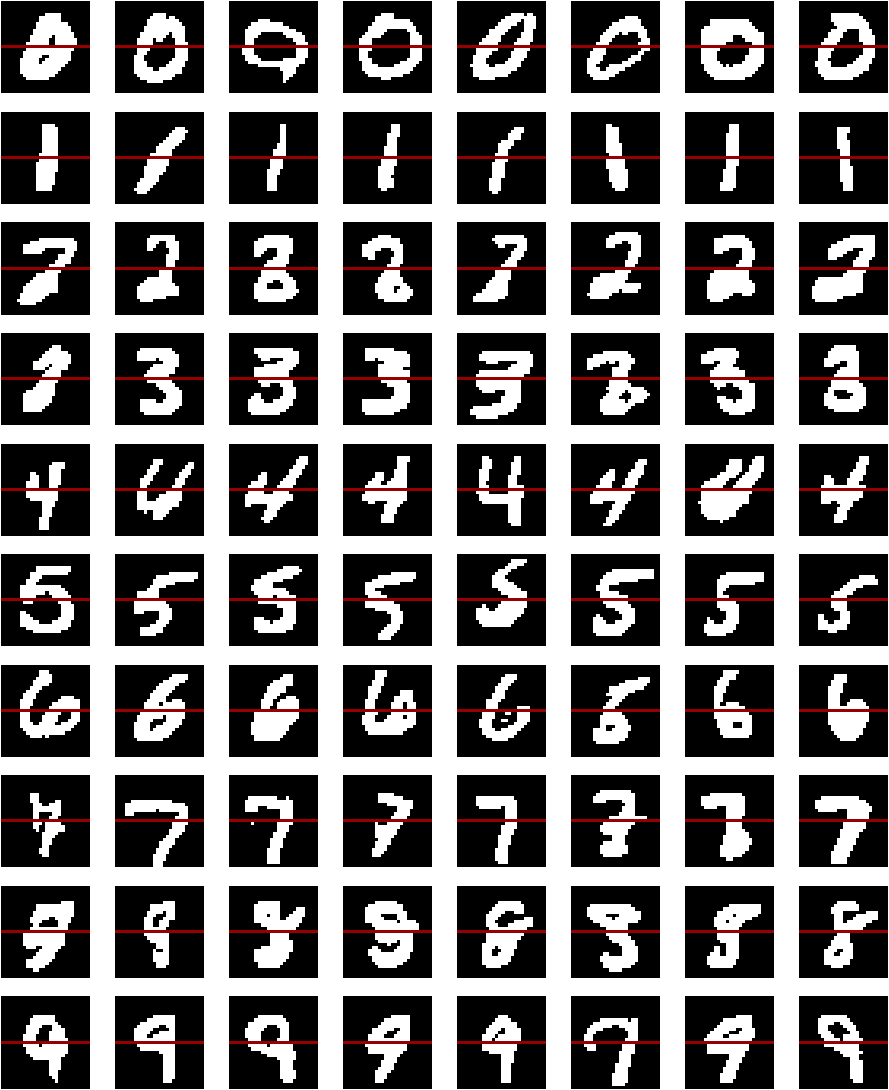}
    \caption{Correlation learnt natively}
    \label{fig:digits_comp}
     \end{subfigure}
        \caption{Imputation of the bottom half of MNIST digits (view 2 of the data) using the top half of the image (view 1) on a JPVAE model trained with (a) independent priors (completely separate VAEs) and (b) a joint prior with learnt correlation structure between latent spaces. %Including correlation structure in the loss function improves ability to impute views; 
        The cross entropy loss between true bottom half of image and imputation is 109.1 in (a) and 93.04 in (b). A classifier trained on the concatenation of the reconstruction of view 1 and imputation of view 2 achieves average testing accuracy of 79.92\% (1.0) in (a) and 87.45\% (0.27) in (b). See Appendix \ref{app:further_res} for further details.}
        \label{fig:digits_comparison}
\end{figure}

\subsection{Contributions}

A novel multi-view imputation approach based on variational autoencoders, named JPVAE, is proposed that in contrast to existing work assumes a joint prior between the latent variables.\footnote{The code for implementation of JPVAE and the numerical experiments contained in this manuscript can be found at \href{https://github.com/eso28599/JPVAE}{https://github.com/eso28599/JPVAE}.} As the multiple views are correlated, the generated latent spaces are found also to be correlated. JPVAE takes advantage of this correlation and whilst marginals for each VAE are assumed to follow a standard normal distribution, a joint prior is assumed with a non-zero cross correlation. The imposed correlation structure is learnt natively by forcing a stronger correlation in the latent space. Section \ref{sec:method} presents the proposed method and related theorems that showcase the validity of the proposed method. As part of our work, we present and prove theorems that enable the parameterization of differentiable positive definite matrices that allow end-to-end learning. 
%Figure \ref{fig:overview} illustrates an overview of JPVAE.
%Using Theorem \ref{thm:evals} and Corollary \ref{cor:orthog} to parameterize differentiable positive definite matrices allows end-to-end learning. 
Through our conducted experiments presented in Section \ref{sec:exp}, learning the correlation structure leads to improved imputation ability, lower reconstruction loss and better predictive likelihood. %(Figure \ref{fig:digits_comparison}). 
Imputed views from JPVAE can be used for downstream tasks, with improved classification accuracy demonstrated. Lastly, better VAE models are learnt with JPVAE preventing posterior collapse, a common problem observed with VAEs \cite{posterior_collapse}.

\begin{figure}
    \centering
    \includegraphics[width=0.7\linewidth]{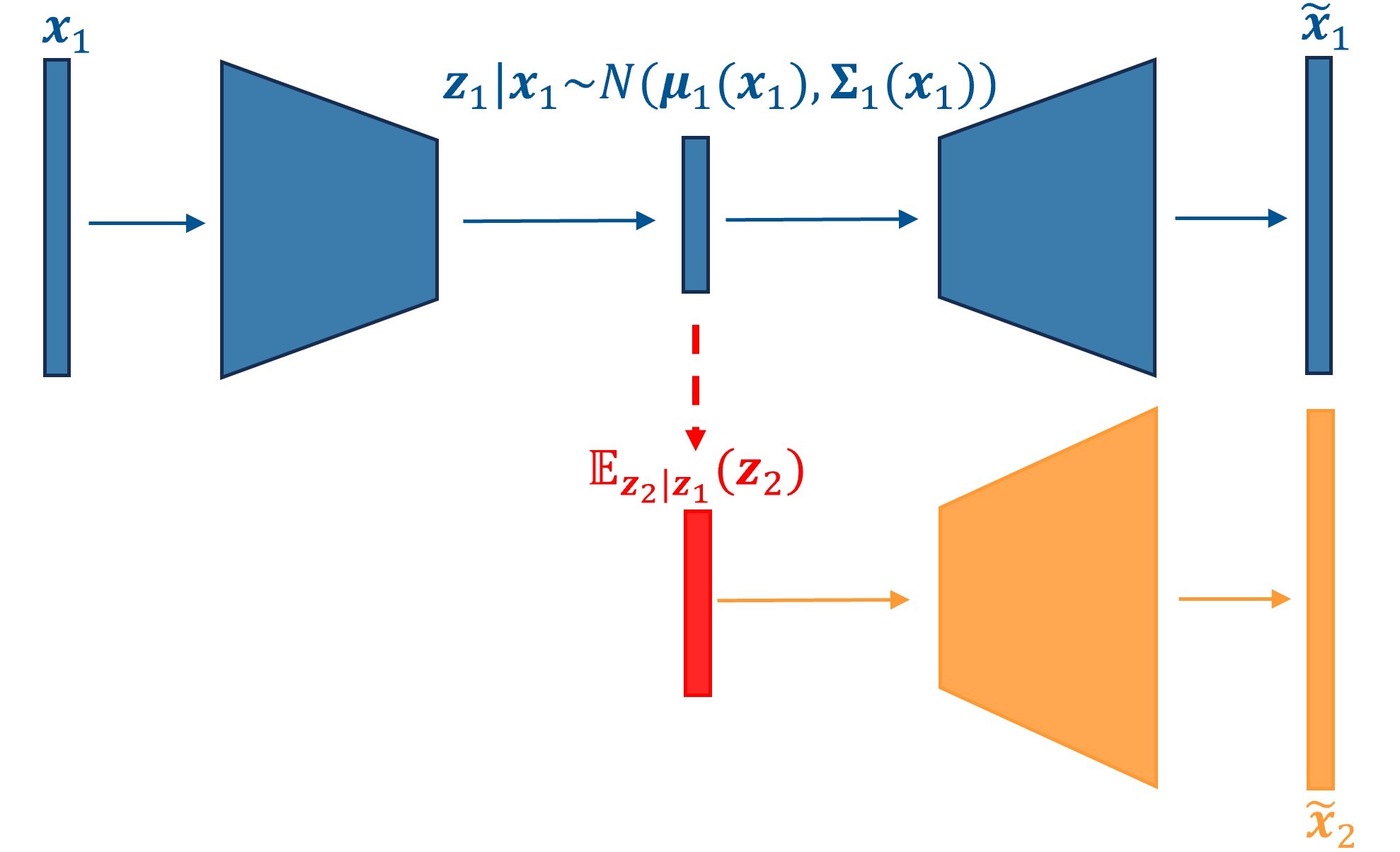}
    \caption{Workflow to obtain reconstruction $\tilde{\boldsymbol{x}}_1$ and imputation $\tilde{\boldsymbol{x}}_{2|1}$  from input $\boldsymbol{x}_1$ only, on pre-trained JPVAE network. Trapeziums represent encoders/decoders and rectangles represent vectors. The structures for view 1 and 2 are shown in blue and orange respectively. The expectation step is shown in red. }
    \label{fig:cross_pred}
\end{figure}
%\textbf{state clearly that because data is correlated (multi-view) then it would mean that latent spaces show some correlation already.}

\subsection{Notation}
Throughout this paper matrices are denoted by capital letters and column vectors are denoted by lower case letters, both emboldened.  $\boldsymbol{X} = \{ \mathbf{x}_i \}_{i = 1}^{n}$ with vector $\mathbf{x}_i \in \mathbb{R}^{c}$ represents an entire dataset in $\mathbb{R}^{n \times c}$, with $\mathbf{x}_i$ an individual realisation.  A diagonal matrix with vector $\boldsymbol{a}$ along the diagonal is represented by $\operatorname{diag}(\boldsymbol{a})$. A block diagonal matrix with matrices $\boldsymbol{A}_i$ along the diagonal is represented by $\operatorname{bdiag}(\boldsymbol{A}_i)$. Vertical concatenation of column vectors $\boldsymbol{a}$ and $\boldsymbol{b}$ is denoted by $(\boldsymbol{a};\boldsymbol{b})$ i.e. $(\boldsymbol{a};\boldsymbol{b})=(\boldsymbol{a}^T;\boldsymbol{b}^T)^T$. The eigenvalues and singular values of matrix $\boldsymbol{A}$ are denoted by $\lambda_i(\boldsymbol{A})$ and $\sigma_i(\boldsymbol{A})$ respectively, indexed by subscript $i$ in order of decreasing magnitude. $\mathbf{I}$ denotes an identity matrix of relevant dimension.

\section{Multi-view variational autoenconder with a joint prior}\label{sec:method}
%A novel multi-view VAE approach is proposed. 
This section presents the novel multi-view VAE approach JPVAE where each view has a separate associated VAE. Before discussing the proposed approach in detail, the main concepts of VAEs are introduced. Subsequently the single-view VAE is extended to the multi-view setting and the joint prior is presented. The different components of JPVAE are presented including theoretical work on matrix parameterization which enables end-to-end learning.

\subsection{Variational autoencoder}
A standard VAE seeks to encode data $\boldsymbol{X}$ in a probabilistic latent space and decode from this space to reconstructed data $\tilde{\boldsymbol{X}}$ \cite{vae}. The goal is for $\tilde{\boldsymbol{X}}$ to be as close as possible to $\boldsymbol{X}$. As the latent space is a probabilistic embedding rather than one obtained by a deterministic mapping, it has the advantage that it can be explored fully, including the sampling of new realisations from the same distribution as $\tilde{\boldsymbol{X}}$. These realisations are assumed to be taken from the same underlying distribution as $\boldsymbol{X}$, $p_{\theta}(\cdot)=p(\cdot|\theta)$ where $\theta$ is the set of parameters defining the distribution.

VAEs are a type of variational Bayesian method that seek to find a lower bound for this marginal probability, $p_{\theta}(\mathbf{X})$, through a Bayesian framework. 
The likelihood of $\boldsymbol{x}\in \mathbb{R}^c$ given latent variable $\boldsymbol{z}\in \mathbb{R}^d$ is denoted by $p_{\theta}(\cdot|\boldsymbol{z})$ and the prior for the latent variables, usually taken to be a standard normal, is denoted by $p_\theta(\boldsymbol{z})$. %The posterior is denoted by $p_{\theta}(\cdot|\boldsymbol{x})$. 
The encoder and decoder are neural networks that seek to learn the posterior distribution of $\boldsymbol{z}$ given $\boldsymbol{x}$, $p_{\theta}(\cdot|\boldsymbol{x})$, and the likelihood of $\boldsymbol{x}$ given $\boldsymbol{z}$ respectively, given the assumed prior over the latent space. In order to make the posterior learnable, an \emph{approximate} posterior distribution is used that usually is a multivariate normal distribution. %This assumes the posterior belongs to a restricted family of distributions, typically the multivariate normal family. 
The approximation $p_{\theta}(\cdot|\boldsymbol{x})\approx q_\phi(\cdot | \boldsymbol{x})$ is made where $\phi$ denotes the parameters of the probabilistic encoder, $q_\phi(\cdot | \boldsymbol{x})$. The encoder maps an input $\boldsymbol{x}$ to a mean vector $\boldsymbol{\mu}(\mathbf{x}) \in \mathbb{R}^d$ and to the log of the vector of variances, $\boldsymbol{\sigma}^2(\mathbf{x})\in \mathbb{R}^d$. A sample is drawn from the approximate posterior distribution and this is used as an input to the neural network which acts as the decoder, $D_{\theta}$. As $\phi$ appears within the distribution of the latent variables, derivatives cannot simply be taken inside the expectation term. Differentiability of the loss function is required for the loss-driven parameter update steps and therefore a reparamaterization trick needs to be implemented \cite{vae}.  

Instead of drawing $\boldsymbol{z}$ directly,  $\boldsymbol{\epsilon} \sim N(\boldsymbol{0}, \boldsymbol{I})$ is drawn and $\boldsymbol{z}=\boldsymbol{\mu}(\mathbf{x})+\operatorname{diag}(\boldsymbol{\sigma}^2(\mathbf{x}))\boldsymbol{\epsilon}$ is determined. The distribution over which we take expectation is now independent of the parameters for which we take derivatives, and derivative update steps may now be implemented. The neural network decoder then seeks to reconstruct $\boldsymbol{x}$ from the latent variable $\boldsymbol{z}$.

A maximum likelihood principle is then applied to the marginal likelihood of the data $p_{\theta}(\boldsymbol{X})$ to obtain estimates for the parameters within the encoders and decoders. As the likelihood is intractable a lower bound on the log-likelihood, known as the Evidence Lower Bound (ELBO) is instead maximised, given by:
\begin{equation}\label{eq:elbo_og}
    \mathcal{L}(\theta,\phi) = \mathbb{E}_{\boldsymbol{z}\sim q_\phi(\cdot | \boldsymbol{x})} \left[\text{ln} (p_{\theta}(\boldsymbol{x}|\boldsymbol{z}))\right] - D_{KL}(q_\phi({\cdot| \boldsymbol{x}})|| p_{\theta}(\cdot)) 
\end{equation}
where $D_{KL}(r|s)$ denotes the Kullback-Leibler (KL) divergence between distributions $r$ and $s$ \citep{kl_div}. $\mathbb{E}_{z\sim q_\phi(\cdot | \boldsymbol{x})}(\cdot)$ denotes the expectation with respect to the conditional distribution of $\boldsymbol{z}$ given $\boldsymbol{x}$. %, a measure of the similarity between two distributions. %Note it is not symmetric.  
%Maximising the ELBO seeks to find a balance between keeping the approximate posterior close to the prior whilst putting weight on the latent variable space that maximises the likelihood of the data given these same variables, $\text{ln} (p_{\theta}(\boldsymbol{x}|\boldsymbol{z}))$. 
Maximising the ELBO is equivalent to finding a balance between an approximate posterior that is close to the prior and putting weight on the latent variable space that maximises the likelihood of the data given these same variables, $\text{ln} (p_{\theta}(\boldsymbol{x}|\boldsymbol{z}))$. Without the KL term, the delta function is returned as the  approximate posterior and the autoencoder is recovered.

With this VAE formulation the KL term often becomes very small or `vanishes' leading to the posterior equalling the prior (posterior collapse). This is referred to as KL vanishing and leads to a decoder that is largely independent of the latent variables. See \cite{cyclical_kl} for further discussion of this problem. To combat this, we implement KL annealing
-- a procedure where a weight $\beta$ is introduced on the KL term and gradually increased, typically from 0, as learning occurs \cite{vanishing_kl}. The objective function becomes:
\begin{equation}\label{eq:elbo}
    \mathcal{L}_{\beta}(\theta,\phi) = \mathbb{E}_{\boldsymbol{z}\sim q_\phi(\cdot | \boldsymbol{x})} \left[\text{ln} (p_{\theta}(\boldsymbol{x}|\boldsymbol{z}))\right] - \beta D_{KL}(q_\phi({\cdot| \boldsymbol{x}})|| p_{\theta}(\cdot)). 
\end{equation}
It is now assumed two data views with shared row dimension are present and represented by $\boldsymbol{X}=\{\boldsymbol{X}_1, \boldsymbol{X}_2\}$ with $\boldsymbol{X}_i \in \mathbb{R}^{N\times c_i}$. For brevity, the sample subscript is dropped and $\boldsymbol{x}_i\in\mathbb{R}^{c_i}$ refers to an individual realisation from view $i$. Here $N$ is the number of rows in both views and $c_i$ is the number of columns in view $i=1,2$. This shared row dimension implies the views are paired, with row $j$ representing the same individual/sample in both views.  This allows for meaningful correlation in the latent space. The notation introduced in this section for single view VAEs is used in the following sections where multi-view VAEs are presented. Subscripts are used to denote the relevant views e.g.  the encoder and decoder in view $i$ are represented by $E_{i\phi_i}$ and $D_{i\theta_i}$ respectively.

\subsection{Joint prior variational autoencoder}

As the different views in a multi-view dataset are from a common source, correlation exists between the views. This translates to correlation between the latent spaces of each view from independently trained VAEs. JPVAE takes advantage of this correlation, enforcing the relationship between the two views via a joint prior on the latent variables, as illustrated in Figure \ref{fig:overview}. Enforcing the proposed correlation structure between the latent spaces ensures we can move from the original space where data are highly correlated in a non-linear fashion, to a space where the correlation is linear, and back to the reconstructed space that has a non-linear correlation. The marginal prior on the latent variables corresponding to each view is a standard normal, as with the traditional VAE. However, a cross-covariance matrix $\boldsymbol{C}$ is assumed between the latent variables $\mathbf{z}_1$ and $\mathbf{z}_2$. Having separate VAE structures for each view assumes conditional independence in both directions: (a) given the data the latent variables are independent, and (b) given the latent variables the data are independent. This allows the unique features of each view to be encoded and decoded. %An overview of the proposed method, named Joint Prior VAEs (JPVAE) is presented in Figure \ref{fig:overview}.

\begin{figure}[t!]
    \centering
    \includegraphics[width=0.8\linewidth]{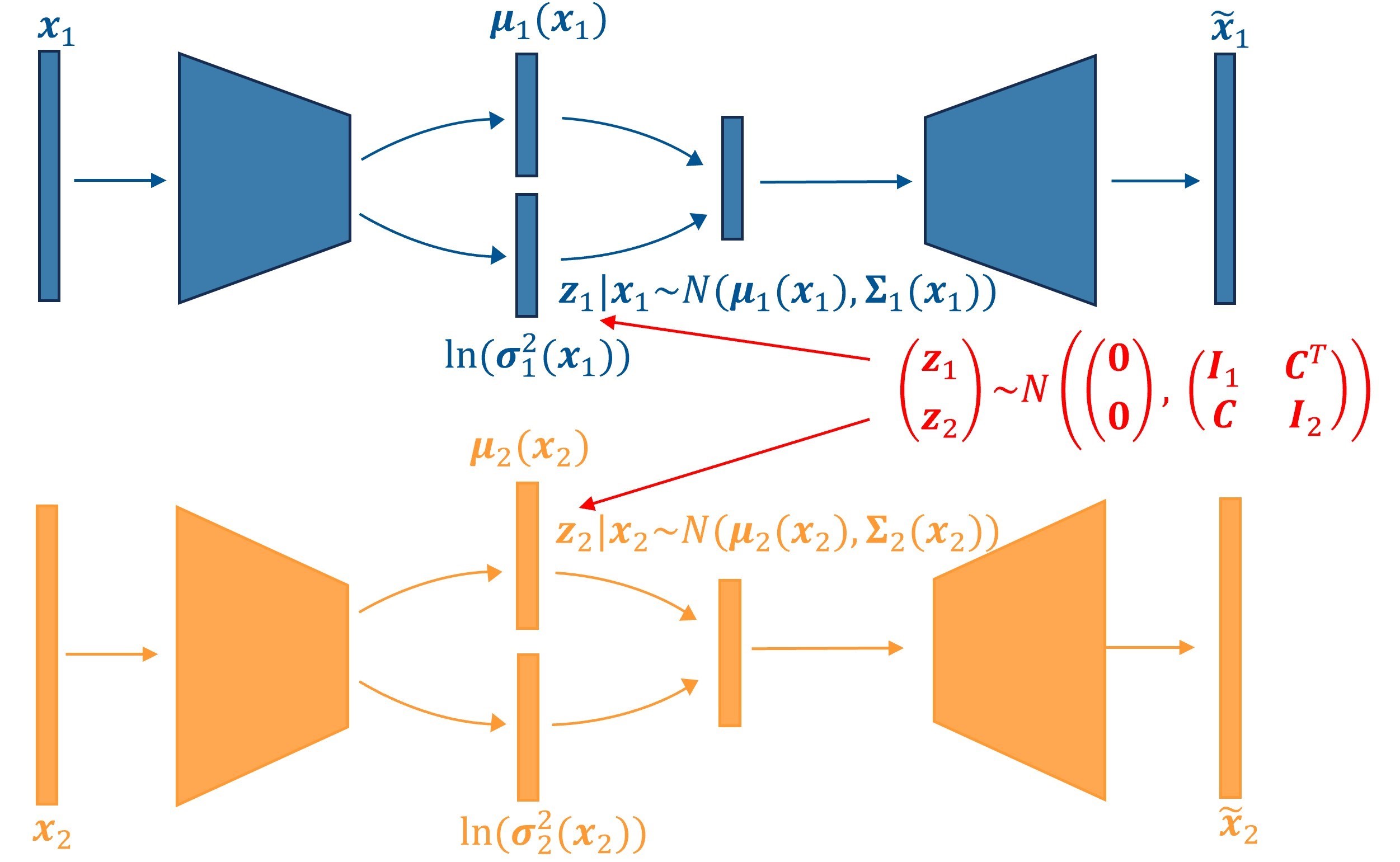}
    \caption{Workflow to obtain reconstructions $\tilde{\boldsymbol{x}}_1$ and $\tilde{\boldsymbol{x}}_2$ from realisations $\boldsymbol{x}_1$ and $\boldsymbol{x}_2$ using a learnt JPVAE structure. Trapeziums present the encoders and decoders of the two views. Vectors are presented by the rectangles. The structures for view 1 and 2 are shown in blue and orange respectively. The prior on the latent variables is shown in red. %As the marginal distributions of the latent variables remain $N(0,1)$, the marginal VAEs remain unchanged. For implementation a parameterization of matrix $C$ which ensures validity of the covariance matrix and differentiability, allowing for end-to-end learning, must be found.
    }
    \label{fig:overview}
\end{figure}

As the latent spaces are linearly correlated, it is possible to move between them via the conditional distribution, as illustrated in Figure \ref{fig:cross_pred}. This allows for imputation of missing views, obtaining a reconstruction of $\boldsymbol{x}_2$ solely from realisation $\boldsymbol{x}_1$ (and vice versa). If a joint prior is not assumed, separate VAEs are trained on each view and there is no correlation enforced between latent spaces. Whilst some correlation exists between the latent spaces, it is not as strong, and therefore may not produce as accurate reproductions, as illustrated in the numerical experiments of Section \ref{sec:exp}.

%JPVAE is most similar in structure to the private version of Deep Variational Canonical Correlation Analysis (VCCA-private), a multi-view VAE approach by \citet{vcca}. However their model contains both private and shared variables in the latent space and doesn't have a suitable approach for imputing missing views. JPVAE can also be seen as an application of the ideas from self-supervised learning techniques such as Barlow Twins \citet{barlow}, where the embeddings from noisy versions of the same input are driven to be highly correlated via a loss function. 
%This approach is proposed for, and theoretical guarantees presented for, two views but could be extended to include additional views. %\color{red} Should this be in the discussion? Some discussion of scalability is needed. Mention it isn't a prior in the traditional sense?\color{black}

%This could be extended to a semi-supervised setting; data without shared dimension treated in separate batches?

%We want to train a VAE on each view. However, we no longer want to assume that the priors are independent. The prior distribution becomes
\subsection{Objective function}
By assuming that latent variables for each view are independent given the data, the approximated posterior distributed can be expressed as:
\begin{align}
    q_\phi({\boldsymbol{z}| \boldsymbol{x}})&=q_{(\phi_1,\phi_2)}({(\boldsymbol{z}_1,\boldsymbol{z}_2)|(\boldsymbol{x}_1,\boldsymbol{x}_2)}) = q_{\phi_1}({\boldsymbol{z}_1|\boldsymbol{x}_1})q_{\phi_2}({\boldsymbol{z}_2|\boldsymbol{x}_2}).
\end{align}

The individual approximate posteriors $q_{\phi_i}({\cdot|\boldsymbol{x}_i})$ are multivariate Gaussians with mean and covariance matrix determined by the output of $E_{i\phi_i}$. For input $\boldsymbol{x}_i$, these are  $\boldsymbol{\mu_{i}}(\boldsymbol{x}_i)$ and $\operatorname{diag}(\boldsymbol{\sigma}_{i}^2(\boldsymbol{x}_i))$ respectively. As the posteriors are assumed to be independent, the joint distribution is multivariate Gaussian with mean $(\boldsymbol{\mu}_{1}(\boldsymbol{x}_1); \boldsymbol{\mu}_{2}(\boldsymbol{x}_2))$. The covariance matrix is represented by $\operatorname{bdiag}(\boldsymbol{\Sigma}_i(\boldsymbol{x}_i))$ with $\boldsymbol{\Sigma}_i(\boldsymbol{x}_i)=\operatorname{diag}(\boldsymbol{\sigma}_i^2(\boldsymbol{x}_i))$.  Similarly, by assuming the data is independent given the latent variables, the likelihood function can be expressed as:
\begin{align}
    p_\theta({\boldsymbol{x}| \boldsymbol{z}})&=p_{(\theta_1,\theta_2)}({(\boldsymbol{x}_1,\boldsymbol{x}_2)|(\boldsymbol{z}_1,\boldsymbol{z}_2)}) =p_{\theta_1}({\boldsymbol{x}_1|\boldsymbol{z}_1})p_{\theta_2}({\boldsymbol{x}_2|\boldsymbol{z}_2}).
\end{align}
This is equivalent to having separate encoders and decoders for each view but with a joint prior. %\textbf{i.e. $\phi_1 =(\mu_1(x_1),\sigma^2_1(x_1))$ and $\phi_2 =(\mu_2(x_2),\sigma^2_2(x_2))$ where $\phi_v$ represents the parameters for the neural networks giving the mean, $\mu_v(\cdot)$, and variance, $\sigma^2_v(\cdot)$, of the.} 
Assuming independence means the expectation term in the ELBO can be separated into terms corresponding to the separate views:
\begin{equation*}
    \mathbb{E}_{\boldsymbol{z}\sim q_\phi(\cdot | \boldsymbol{x})} \left[\text{ln} p_\theta(\boldsymbol{x}|\boldsymbol{z})\right] 
 = \mathbb{E}_{\boldsymbol{z}_1\sim q_{\phi_1}({\cdot|\boldsymbol{x}_1})} \left[\text{ln} p_{\theta_1}({\boldsymbol{x}_1|\boldsymbol{z}_1})\right]  + \mathbb{E}_{\boldsymbol{z}_2\sim q_{\phi_2}({\cdot|\boldsymbol{x}_2})} \left[\text{ln} p_{\theta_2}({\boldsymbol{x}_2|\boldsymbol{z}_2})\right].
\end{equation*}

The objective function therefore becomes:
\begin{align}
    \label{eq:obj}
    %\mathcal{L}(\theta_1,\phi_1) =& \mathbb E_{z \sim q_{\phi_1}(\cdot | x)} \left[\text{ln} p_{\theta_1}(x_1|z_1)\right] +  \mathbb E_{z_2 \sim q_{\phi_2}(\cdot | x)} \left[\text{ln} p_{\theta_2}(x_2|z_2)\right] \notag \\  &- \beta D_{KL}(q_\phi({\cdot| x})\parallel p_{\theta}(\cdot)) \notag  \\
    \mathcal{L}_{\beta}(\theta_1,\phi_1,\theta_2,\phi_2) =& \mathbb E_{\boldsymbol{z}_1 \sim q_{\phi_1}(\cdot | \boldsymbol{x})} \left[\text{ln} p_{\theta_1}(\boldsymbol{x}_1|\boldsymbol{z}_1)\right] +  \mathbb E_{\boldsymbol{z}_1\sim q_{\phi_2}(\cdot | \boldsymbol{x})} \left[\text{ln} p_{\theta_2}(\boldsymbol{x}_2|\boldsymbol{z}_2)\right] \\  &- \beta D_{KL}(q_{\phi_1}({\cdot|\boldsymbol{x}_1})q_{\phi_2}({\cdot|\boldsymbol{x}_2})\parallel p_{\theta}(\cdot)).  \notag
\end{align}

\subsection{Joint prior}
There are assumed to be  $n_{i}$ latent variables in the VAE for view $i$, represented by the random vector $\boldsymbol{z}_i \in \mathbb{R}^{n_{i}}$. Without loss of generality it is that assumed $n_1\leq n_2$.
The joint prior distribution of the random vector $\boldsymbol{z}  =(\boldsymbol{z}_1 , \boldsymbol{z}_2 ) \in \mathbb{R}^{n_{1}+n_{2}}$ is assumed to be a multivariate normal with mean $\textbf{0}$ and covariance matrix, $\boldsymbol{\Sigma}_C$: 
\begin{equation}\label{eq:sigma}
    \boldsymbol{\Sigma}_C =
      \begin{pmatrix} \boldsymbol{I}_{1} & \boldsymbol{C}^T  \\  \boldsymbol{C} &  \boldsymbol{I}_{2} \end{pmatrix} 
\end{equation}
where $\boldsymbol{C}\in \mathbb{R}^{n_{2} \times n_{1}}$ is the cross-covariance matrix encapsulating the relationship between the two latent spaces. $\boldsymbol{I}_i$ is the $n_i\times n_i$ identity matrix. As $\boldsymbol{\Sigma}_C$ is a covariance matrix, it needs to satisfy two conditions: symmetry and positive semi-definiteness. With the defined structure it is clearly symmetric and so it is sufficient to require $\boldsymbol{\Sigma}_C$ to be a positive semi-definite matrix to ensure it is a well-defined covariance matrix. For a covariance matrix with structure as defined in Eq. \ref{eq:sigma}, it is useful to note that $\boldsymbol{\Sigma}_C$ is positive definite if and only if $\boldsymbol{I}_1-\boldsymbol{C}^T\boldsymbol{C}$ (or $\boldsymbol{I}_2-\boldsymbol{CC}^T$) is positive definite \cite{schur_pos_def}. Additionally, as (a) a real symmetric matrix is positive (semi) definite if and only if all of its eigenvalues are positive (non-negative) \cite[p.~51] {matrix_cookbook} and (b) a matrix is invertible if and only if all of its eigenvalues are non-zero, covariance matrix $\boldsymbol{\Sigma}_C$ (and  $\boldsymbol{I}_1-\boldsymbol{C}^T\boldsymbol{C} / \boldsymbol{I}_2-\boldsymbol{CC}^T$) is positive definite if and only if it is invertible.

%\begin{equation}\label{eg:p}
 % \mathcal{G}\left(  \begin{pmatrix} Z_1  \\ Z_2 \end{pmatrix} ;
   % \begin{pmatrix} 0 \\ 0  \end{pmatrix}
    %  \begin{pmatrix} I_1 & C^T  \\  C &  I_2 \end{pmatrix} 
    %\right)
%\end{equation}
%where $C$ is the cross-covariance matrix encapsulating the relationship between the views in the embedded space and $I_i$ is the $n_{z_i}$-identity matrix. We denote $\Sigma_C$ as the covariance matrix for the prior.   In the case with two separate VAEs, we assume that $C$ is the zero matrix. 
%We want to find the posterior distribution for this case.

\subsection{Kullback–Leibler divergence term}\label{sec:kl_term}

As discussed earlier, for calculating the ELBO %given in Eq. \ref{eq:kl_final} 
for the proposed prior, the KL divergence between $q_{\phi_1}({\cdot|\boldsymbol{x}_1})q_{\phi_2}({\cdot|\boldsymbol{x}_2})$ and the prior on $(\boldsymbol{z}_1,\boldsymbol{z}_2)$, $p_{\boldsymbol{C}}$, is required. The following general result on KL divergence between multivariate Gaussians is applied in this setting.

Between two $k$-dimensional multivariate Gaussians $r$ and $s$ with respective means $\boldsymbol{\mu}_r$ and $\boldsymbol{\mu}_s$ and respective covariance matrices $\boldsymbol{\Sigma}_r$ and $\boldsymbol{\Sigma}_s$, the KL divergence is given by \cite{multi_norm_kl}: 
\begin{align}\label{eq:kl}
    D_{K L}(s || r)=\frac{1}{2}\left[\operatorname{ln} \frac{\left|\boldsymbol{\Sigma}_r\right|}{\left|\boldsymbol{\Sigma}_s\right|}-k+\left(\boldsymbol{\mu}_{s}-\boldsymbol{\mu}_{r}\right)^T \boldsymbol{\Sigma}_r^{-1}\left(\boldsymbol{\mu}_{s}-\boldsymbol{\mu}_{r}\right)+\operatorname{tr}\left\{\boldsymbol{\Sigma}_r^{-1} \boldsymbol{\Sigma}_s\right\}\right].
\end{align}
As this assumes the existence of $\boldsymbol{\Sigma}_r^{-1}$ and $\boldsymbol{\Sigma}_s^{-1}$, both covariance matrices must be positive definite.

%If independence is assumed, this is equivalent to $C=C^T=0$ (and $D=I$). As expected this reduces the above to:
%\begin{align*}
    %D_{K L}(q || p)
    %&= \frac{1}{2}\left[\mu_1^T\mu_1- \operatorname{ln}\left|\Sigma_1\right| - n_v +\operatorname{tr}\left\{\Sigma_1 \right\} \right]\\
    %&\quad \quad + \frac{1}{2}\left[\mu_2^T\mu_2- \operatorname{ln}\left|\Sigma_2\right| - n_v +\operatorname{tr}\left\{\Sigma_2 \right\}\right]
%\end{align*}

By utilising the result of Eq. \ref{eq:kl} with the prior over $\boldsymbol{z}=(\boldsymbol{z}_1,\boldsymbol{z}_2)$ set as $r=p_{\boldsymbol{C}}$, and the approximate posterior distribution as $s=q_{\phi_1}({\cdot|\boldsymbol{x}_1})q_{\phi_2}({\cdot|\boldsymbol{x}_2})$, the KL term of Eq. \ref{eq:elbo} is obtained.
%Eq. \ref{eq:kl} is applied with $r$ the prior over $\boldsymbol{z}=(\boldsymbol{z}_1,\boldsymbol{z}_2)$ ($p_{\boldsymbol{C}}$) and $s$ the approximate posterior distribution ($q_{\phi_1}({\cdot|\boldsymbol{x}_1})q_{\phi_2}({\cdot|\boldsymbol{x}_2}))$, to give the KL term of Eq. \ref{eq:elbo}.
Explicitly, $p_{\boldsymbol{C}}=N(\boldsymbol{0}, \boldsymbol{\Sigma}_C)$ and $q_{\phi_1}({\cdot|\boldsymbol{x}_1})q_{\phi_2}({\cdot|\boldsymbol{x_{2}}}) = N((\boldsymbol{\mu}_{1}(\boldsymbol{x}_1), \boldsymbol{\mu}_{2}(\boldsymbol{x}_2)),\boldsymbol{\Sigma}_q)$ with $\boldsymbol{\Sigma}_q=\operatorname{bdiag}(\boldsymbol{\Sigma}_i)$. For the utilisation of Eq. \ref{eq:kl},  it is assumed that $\boldsymbol{\Sigma}_C$ is positive definite and the variances of the approximate posterior are positive.

Using the block matrix inverse result from \cite[p. 18]{matrix_analysis} (stated in Appendix \ref{app:block_mat}) the inverse of $\boldsymbol{\Sigma}_C$ is given by:
\begin{align} \label{eq:sigma_inv}
    \boldsymbol{\Sigma}_C^{-1} = \begin{pmatrix} \boldsymbol{D}_1 & -\boldsymbol{D}_1\boldsymbol{C}^T  \\  -\boldsymbol{D}_2\boldsymbol{C} &  \boldsymbol{D}_2 \end{pmatrix}
\end{align}
where $\boldsymbol{D}_1=(\boldsymbol{I}_1-\boldsymbol{C}^T\boldsymbol{C})^{-1}$ and $\boldsymbol{D}_2=(\boldsymbol{I}_2-\boldsymbol{C}\boldsymbol{C}^T)^{-1}$. These inverses are guaranteed to exist, given the assumption of positive definiteness on $\boldsymbol{\Sigma}_C$.
%With the specified block matrix structure given in Eq. \ref{eq:sigma}, assuming $\boldsymbol{\Sigma}_C$ is positive definite implies positive definiteness of $\boldsymbol{I}_1-\boldsymbol{C}^T\boldsymbol{C}$ and $\boldsymbol{I}_2-\boldsymbol{C}\boldsymbol{C}^T$ and ensures the inverses of these real symmetric matrices exist. 
Using Eq. \ref{eq:sigma_inv} and properties of the trace, we obtain:
 %&= \begin{pmatrix} (I_1- C^TC)^{-1} & 0  \\  0 &  (I_2- CC^T)^{-1} \end{pmatrix}\begin{pmatrix} I_1 & -C^T  \\  -C &  I_2 \end{pmatrix} = 
%\operatorname{tr}\left\{\begin{pmatrix} D_1 & -D_1C^T  \\  -D_2C &  D_2 \end{pmatrix}  \begin{pmatrix} \Sigma_{1} & 0  \\  0 &   \Sigma_{2}\end{pmatrix} \right\} \\
\begin{align*}
    \operatorname{tr}\left\{\boldsymbol{\Sigma}_C^{-1} \boldsymbol{\Sigma}_q\right\}  
    &= \operatorname{tr}\left\{
    \begin{pmatrix} \boldsymbol{D}_1\boldsymbol{\Sigma}_1 & -\boldsymbol{D}_1\boldsymbol{C}^T \boldsymbol{\Sigma}_2 \\  -\boldsymbol{D}_2\boldsymbol{C}\boldsymbol{\Sigma}_1 & \boldsymbol{D}_2\boldsymbol{\Sigma}_{2} \end{pmatrix} \right\} 
    %&=\operatorname{tr}\left\{\begin{pmatrix} D_1\Sigma_{1} & 0 \\  0 & D_2 \Sigma_{2} \end{pmatrix} \right\} \\
    = \operatorname{tr}\left\{\boldsymbol{D}_1 \boldsymbol{\Sigma}_1 \right\} +\operatorname{tr}\left\{ \boldsymbol{D}_2 \boldsymbol{\Sigma}_2 \right\}.
\end{align*}

Additionally, as $\boldsymbol{\Sigma_C}$ is a block matrix, $|\boldsymbol{\Sigma}_C|= |\boldsymbol{I}_1-\boldsymbol{C}^T\boldsymbol{C}|=1/|\boldsymbol{D}_1|=1/|\boldsymbol{D}_2|$ \cite[p. 114]{abadir2005matrix}. Similarly. as $\boldsymbol{\Sigma}_q$ is a diagonal matrix we have $|{\boldsymbol\Sigma}_q|=|\boldsymbol{\Sigma}_1||\boldsymbol{\Sigma}_2|$. Writing $\boldsymbol{\mu_i} = \boldsymbol{\mu_i}(\boldsymbol{x}_i)$ and  noting that $\boldsymbol{\mu}_{p}=0$, we obtain:
%Additionally, as $\Sigma_1$ and $\Sigma_2$ are diagonal matrices by assumption, we have that if $\sigma_i^2$ is the row vector of the diagonal of $\Sigma_i$:
%$$ \operatorname{tr}\left\{ A_{ii} \Sigma_i \right\} =  \operatorname{tr}\left\{\Sigma_i A_{ii}  \right\}= \operatorname{tr}\left\{\sigma^2_i I A_{ii}  \right\}$$
% & = \frac{1}{2}\left[-\operatorname{ln} \left|D_1\right| - \operatorname{ln}\left|\Sigma_1\right|- \operatorname{ln}\left|\Sigma_2\right|-n_1-n_2+\mu_1^TD_1\mu_1 +\mu_2^TD_2\mu_2 \right]\\
   % & \quad \quad \frac{1}{2}\left[- \mu_1^TD_1C^T\mu_2 - \mu_2^TD_2C\mu_2+\operatorname{tr}\left\{D_1 \Sigma_1 \right\} +\operatorname{tr}\left\{ D_2\Sigma_2 \right\}\right]\\
\begin{align}\label{eq:kl_final}
    D_{K L}(q_{\phi_1}(\cdot|\boldsymbol{x}_1)q_{\phi_2}(\cdot|\boldsymbol{x}_2) || p_{\boldsymbol{C}})
    &= \frac{1}{2}\left[\boldsymbol{\mu}_1^T\boldsymbol{D}_1\boldsymbol{\mu}_1- \operatorname{ln}\left|\boldsymbol{\Sigma}_1\right| - n_1 +\operatorname{tr}\left\{\boldsymbol{D}_1\boldsymbol{\Sigma}_1 \right\} \right]\\
    &\quad \quad + \frac{1}{2}\left[\boldsymbol{\mu}_2^T\boldsymbol{D}_2\boldsymbol{\mu}_2- \operatorname{ln}\left|\boldsymbol{\Sigma}_2\right| - n_2 +\operatorname{tr}\left\{ \boldsymbol{D}_2\boldsymbol{\Sigma}_2 \right\}\right] \notag \\
    & \quad \quad \quad  -\frac{1}{2}\left[\operatorname{ln} \left|\boldsymbol{D}_1\right| + \boldsymbol{\mu}_1^T\boldsymbol{D}_1\boldsymbol{C}^T\boldsymbol{\mu}_2+\boldsymbol{\mu}_2^T\boldsymbol{D}_2\boldsymbol{C\mu}_1\right].\notag
\end{align}

\subsection{Matrix parameterization}
The matrix $\boldsymbol{C}$ is unknown and therefore must be either chosen apriori or be optimised. In our work, $\boldsymbol{C}$ is updated along with all other parameters at each update step, using the ELBO function as the loss function. By doing this, two challenges were faced and addressed. Firstly, $C$ must be updated in such a way that $\Sigma_C$ is a valid covariance matrix. Secondly, a differentiable parameterization of $C$ is needed to allow for end-to-end learning. % To ensure an updated $\boldsymbol{\Sigma}_C$ is a valid covariance matrix, $\boldsymbol{\Sigma}_C$ is required to be positive semi-definite matrix.   
Conditions for validity of the update step are outlined in the following theorems.
%The following theorem and corollary are presented that ensure the validity of the updated $\boldsymbol{\Sigma}_C$. 
%The following section introduces requirements on $\boldsymbol{C}$ that ensure the validity of the updated $\boldsymbol{\Sigma}_C$. %Requirements on $C$ ensuring the validity of the updated $\Sigma_C$ are now introduced.

%\subsubsection{Validity}
%Note it is also possible to not require validity via the update itself, but rather enforce it via scaling and tricks; this approach is considered in the Appendix. 
%This is equivalent to $I_1-C^TC$ being positive semi-definite [ref?]. As $I_1$ is positive semi-definite with all eigenvalues equal to 1, 
%\begin{theorem}\label{thm:evals}
%If the eigenvalues of $\boldsymbol{C}\in\mathbb{R}^{n_2\times n_1}$ given by $\{\lambda_i(\boldsymbol{C})\}_{i=1}^{n_1}$ and indexed in order of increasing magnitude satisfy $|\lambda_i(\boldsymbol{C})|\leq 1 \ \forall i$, then $\boldsymbol{\Sigma}_C$ defined as in Eq. \ref{eq:sigma} is positive semi-definite.  
%\end{theorem}
\begin{theorem}\label{thm:evals}
$\boldsymbol{\Sigma}_C$ defined as in Eq. \ref{eq:sigma} is positive semi-definite if and only if all singular values of $C$ are bounded by 1.  
\end{theorem}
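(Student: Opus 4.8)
The plan is to reduce the positive semi-definiteness of the block matrix $\boldsymbol{\Sigma}_C$ to a statement about the single matrix $\boldsymbol{I}_1 - \boldsymbol{C}^T\boldsymbol{C}$, and then to read off the constraint on the singular values of $\boldsymbol{C}$ from the eigenvalues of that matrix. This mirrors the Schur-complement fact already quoted in the text for the positive \emph{definite} case, but since here I need the semi-definite statement I would establish the reduction directly rather than merely cite it, so as to be sure the boundary case is covered.

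First I would write out the quadratic form associated with $\boldsymbol{\Sigma}_C$ for an arbitrary vector partitioned as $(\boldsymbol{u};\boldsymbol{v})$ with $\boldsymbol{u}\in\mathbb{R}^{n_1}$ and $\boldsymbol{v}\in\mathbb{R}^{n_2}$:
\[
(\boldsymbol{u};\boldsymbol{v})^T \boldsymbol{\Sigma}_C (\boldsymbol{u};\boldsymbol{v}) = \|\boldsymbol{u}\|^2 + 2\,\boldsymbol{v}^T\boldsymbol{C}\boldsymbol{u} + \|\boldsymbol{v}\|^2 .
\]
For fixed $\boldsymbol{u}$ this is a convex quadratic in $\boldsymbol{v}$, minimised at $\boldsymbol{v} = -\boldsymbol{C}\boldsymbol{u}$, with minimum value $\boldsymbol{u}^T(\boldsymbol{I}_1 - \boldsymbol{C}^T\boldsymbol{C})\boldsymbol{u}$. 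Hence the form is non-negative for every $(\boldsymbol{u};\boldsymbol{v})$ if and only if $\boldsymbol{u}^T(\boldsymbol{I}_1 - \boldsymbol{C}^T\boldsymbol{C})\boldsymbol{u}\ge 0$ for all $\boldsymbol{u}$, i.e. $\boldsymbol{\Sigma}_C$ is positive semi-definite iff $\boldsymbol{I}_1 - \boldsymbol{C}^T\boldsymbol{C}$ is. The fact that the lower-right block is $\boldsymbol{I}_2$, which is strictly positive definite, is exactly what makes the inner minimisation over $\boldsymbol{v}$ well posed; this completing-the-square step is the place where some care is needed, since the Schur-complement criterion for mere semi-definiteness can fail when the pivot block is singular. (Symmetrically, minimising over $\boldsymbol{u}$ instead would give the equivalent condition on $\boldsymbol{I}_2-\boldsymbol{C}\boldsymbol{C}^T$.)

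Finally I would translate the condition on $\boldsymbol{I}_1-\boldsymbol{C}^T\boldsymbol{C}$ into one on singular values. A real symmetric matrix is positive semi-definite iff all its eigenvalues are non-negative (already noted in the excerpt), so $\boldsymbol{I}_1 - \boldsymbol{C}^T\boldsymbol{C}\succeq 0$ iff every eigenvalue of $\boldsymbol{C}^T\boldsymbol{C}$ is at most $1$. By definition the eigenvalues of $\boldsymbol{C}^T\boldsymbol{C}$ are the squared singular values $\sigma_i(\boldsymbol{C})^2$, so this is equivalent to $\sigma_i(\boldsymbol{C})^2 \le 1$, and since singular values are non-negative, to $\sigma_i(\boldsymbol{C}) \le 1$ for all $i$, which is the claim. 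The only genuine obstacle is keeping the chain of equivalences tight at the boundary $\sigma_i(\boldsymbol{C}) = 1$, where $\boldsymbol{\Sigma}_C$ becomes singular yet remains positive semi-definite; the completing-the-square argument keeps this case inside the biconditional automatically, whereas an approach relying on invertibility would need a separate limiting argument.
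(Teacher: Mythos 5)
Your proof is correct, and it takes a genuinely more elementary route than the paper's. Both arguments reduce positive semi-definiteness of $\boldsymbol{\Sigma}_C$ to that of $\boldsymbol{I}_1-\boldsymbol{C}^T\boldsymbol{C}$ and then read the condition off the squared singular values, but the mechanics differ at both steps. For the reduction, the paper invokes its earlier Schur-complement remark --- which it stated only for positive \emph{definiteness} --- and asserts the semi-definite analogue without comment; your completing-the-square identity, $\|\boldsymbol{u}\|^2+2\boldsymbol{v}^T\boldsymbol{C}\boldsymbol{u}+\|\boldsymbol{v}\|^2=\|\boldsymbol{v}+\boldsymbol{C}\boldsymbol{u}\|^2+\boldsymbol{u}^T(\boldsymbol{I}_1-\boldsymbol{C}^T\boldsymbol{C})\boldsymbol{u}$, proves exactly the semi-definite biconditional that is needed, and your observation that this works because the pivot block $\boldsymbol{I}_2$ is positive definite (the generalized Schur criterion requires extra range conditions when the pivot is singular) is precisely the point the paper leaves implicit, so your version in fact tightens a small step in the published argument. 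For the spectral step, the paper deploys Weyl's inequality, sandwiching $\lambda_k(\boldsymbol{I}_1-\boldsymbol{C}^T\boldsymbol{C})$ between two coinciding bounds to force $\lambda_k(\boldsymbol{I}_1-\boldsymbol{C}^T\boldsymbol{C})=1-\sigma_{n_1+1-k}^2(\boldsymbol{C})$; you instead use the elementary fact that $(\boldsymbol{I}_1-\boldsymbol{C}^T\boldsymbol{C})\boldsymbol{x}=(1-\lambda)\boldsymbol{x}$ whenever $\boldsymbol{C}^T\boldsymbol{C}\boldsymbol{x}=\lambda\boldsymbol{x}$, so that $\boldsymbol{I}_1-\boldsymbol{C}^T\boldsymbol{C}\succeq 0$ iff every eigenvalue of $\boldsymbol{C}^T\boldsymbol{C}$, i.e.\ every $\sigma_i^2(\boldsymbol{C})$, is at most $1$ --- rendering Weyl's theorem unnecessary for this particular perturbation by the identity. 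What the paper's route buys is reuse of two quotable external lemmas (and Weyl's inequality would generalize to non-identity diagonal blocks); what yours buys is a self-contained argument in which every equivalence remains valid at the boundary $\sigma_1(\boldsymbol{C})=1$, where $\boldsymbol{\Sigma}_C$ is singular yet still positive semi-definite, without any appeal to invertibility.
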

%With the specified block matrix structure  $\Sigma_C$ is positive definite if and only if $I_1-C^TC$ or $I_2-CC^T$ is positive definite \color{blue}[add ref] \color{black}. 
\begin{proof}
Due to previous remarks, it is equivalent to show that $\boldsymbol{I}_1-\boldsymbol{C}^T\boldsymbol{C}$ is positive semi-definite if and only if the singular values of $\boldsymbol{C}$ are bounded by 1. Further, this is equivalent to the showing that eigenvalues of $\boldsymbol{I}_1 - \boldsymbol{C}^T\boldsymbol{C}$ are non-negative if and only if the eigenvalues of $C$ are bounded by 1. 

The eigenvalues of $\boldsymbol{C}^T\boldsymbol{C}$ are given by $\{\sigma_k^2(\boldsymbol{C})\}_{k=1}^{n_1}$ \cite{gram_evals} meaning the eigenvalues of $-\boldsymbol{C}^T\boldsymbol{C}$ are given by $\{-\sigma_{k}^2(\boldsymbol{C})\}_{k=1}^{n_1}$. Applying Weyl's Theorem \cite[p. ~242]{matrix_analysis} (stated in Appendix \ref{app:weyl}) to $\boldsymbol{I}_1-\boldsymbol{C}^T\boldsymbol{C}$ implies:
\begin{align}
    \lambda_1(\boldsymbol{I}_1) + \lambda_{n_1+1-k}(-\boldsymbol{C}^T\boldsymbol{C}) \leq \lambda_k&(\boldsymbol{I}_1-\boldsymbol{C}^T\boldsymbol{C}) \leq \lambda_{n_1}(\boldsymbol{I}_1) + \lambda_{n_1+1-k}(-\boldsymbol{C}^T\boldsymbol{C}) \label{eq:evals1}\\
    \implies 1  \leq \lambda_k&(\boldsymbol{I}_1-\boldsymbol{C}^T\boldsymbol{C}) - \lambda_{n_1+1-k}(-\boldsymbol{C}^T\boldsymbol{C})\leq 1 \notag
    %-\lambda_k(-\boldsymbol{C}^T\boldsymbol{C}) \in [\lambda_{n_1}(\boldsymbol{I}_1),\lambda_1(\boldsymbol{I}_1) ]=\{1\}
\end{align}
which combined with the relations above gives $ \lambda_k(\boldsymbol{I}_1-\boldsymbol{C}^T\boldsymbol{C})=1 +\lambda_{n_1+1-k}(-\boldsymbol{C}^T\boldsymbol{C})= 1 -\sigma_{n_1+1-k}^2(\boldsymbol{C})$. Therefore all eigenvalues of $\boldsymbol{I}_1-\boldsymbol{C}^T\boldsymbol{C}$ are non-negative if and only if all singular values of $C$ are bounded by 1. 
\end{proof}

If the inequality on $\sigma_1(\boldsymbol{C})$ is replaced by a strict inequality then $\boldsymbol{\Sigma}_C$ is guaranteed to be positive definite. This is clear as all eigenvalues of $\boldsymbol{I}_1-\boldsymbol{C}^T\boldsymbol{C}$ are now positive which ensures positive definiteness.  This guarantees the applicability of Eq. \ref{eq:kl_final}. A further restriction, outlined in Theorem \ref{cor:orthog}, can be made which enables a different parameterization of $C$ which assumes a scaled orthogonal relationship between the views. 
%the existence of the inverse $\boldsymbol{\Sigma}_C^{-1}$ and the applicability of Eq. \ref{eq:kl_final}. 

%By further restricting $\boldsymbol{\Sigma}_C$ to be positive definite ensures %the existence of $\boldsymbol{\Sigma}_C^{-1}$ and

%To use the KL divergence form given by , the inverses $I_1-CC^T$ and $I_2-C^TC$ must exist. A strict inequality on $|\lambda_i(C)|$ ensures the existence of $I_2-C^TC$. As $C^TC$ and $CC^T$ share non-zero eigenvalues, the same reasoning can be applied to show all eigenvalues of $I_1-CC^T$ are positive and its inverse exists under the strict equality assumption.
\begin{theorem}\label{cor:orthog}
    If $\boldsymbol{C}=\alpha \tilde{\boldsymbol{C}}\in\mathbb{R}^{n_2\times n_1}$ where $|\alpha|\leq 1$ and $\tilde{\boldsymbol{C}}$ is a semi-orthogonal matrix ($\tilde{\boldsymbol{C}}^T\tilde{\boldsymbol{C}}=I_1$) then $\boldsymbol{\Sigma}_C$ is positive semi-definite. 
\end{theorem}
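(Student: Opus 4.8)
The plan is to reduce the claim directly to Theorem \ref{thm:evals} by computing the singular values of $\boldsymbol{C}$ explicitly. By that theorem, $\boldsymbol{\Sigma}_C$ is positive semi-definite precisely when every singular value of $\boldsymbol{C}$ is bounded by $1$, so it suffices to establish this bound under the scaled semi-orthogonal parameterization.

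First I would form the Gram matrix $\boldsymbol{C}^T\boldsymbol{C}$ and exploit the scaling. Since $\boldsymbol{C}=\alpha\tilde{\boldsymbol{C}}$, we have
\begin{equation*}
    \boldsymbol{C}^T\boldsymbol{C} = \alpha^2\,\tilde{\boldsymbol{C}}^T\tilde{\boldsymbol{C}} = \alpha^2\boldsymbol{I}_1,
\end{equation*}
where the last equality is the semi-orthogonality hypothesis. Hence every eigenvalue of $\boldsymbol{C}^T\boldsymbol{C}$ equals $\alpha^2$. Recalling (as already used in the proof of Theorem \ref{thm:evals}) that the singular values of $\boldsymbol{C}$ are the non-negative square roots of the eigenvalues of $\boldsymbol{C}^T\boldsymbol{C}$, I would conclude that $\sigma_k(\boldsymbol{C})=|\alpha|$ for every $k$. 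The assumption $|\alpha|\leq 1$ then gives $\sigma_k(\boldsymbol{C})\leq 1$ for all $k$, and Theorem \ref{thm:evals} immediately yields positive semi-definiteness of $\boldsymbol{\Sigma}_C$.

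There is essentially no serious obstacle here, as the statement is a direct specialization of Theorem \ref{thm:evals}; the argument is a one-line singular value computation. The only point deserving a moment's care is the role of the standing convention $n_1\leq n_2$: semi-orthogonality in the form $\tilde{\boldsymbol{C}}^T\tilde{\boldsymbol{C}}=\boldsymbol{I}_1$ forces $\tilde{\boldsymbol{C}}$ to have full column rank $n_1$, which is consistent only when $n_1\leq n_2$. Under this convention the relevant Gram matrix $\boldsymbol{C}^T\boldsymbol{C}$ is the $n_1\times n_1$ matrix $\alpha^2\boldsymbol{I}_1$, so no singular value exceeding $|\alpha|$ can arise and the bound is tight. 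I would also note in passing that strict inequality $|\alpha|<1$ upgrades the conclusion to positive definiteness, matching the remark following Theorem \ref{thm:evals}.
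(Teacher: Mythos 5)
Your proof is correct, but it is not the route the paper's own proof takes: the paper argues directly, computing $\boldsymbol{I}_1-\boldsymbol{C}^T\boldsymbol{C}=\boldsymbol{I}_1-\alpha^2\tilde{\boldsymbol{C}}^T\tilde{\boldsymbol{C}}=(1-\alpha^2)\boldsymbol{I}_1$ and invoking the Schur-complement criterion stated earlier (that $\boldsymbol{\Sigma}_C$ is positive semi-definite exactly when $\boldsymbol{I}_1-\boldsymbol{C}^T\boldsymbol{C}$ is), with no appeal to Theorem \ref{thm:evals} at all. Your reduction via singular values is precisely the alternative the paper sketches in the remark immediately after its proof (``this could be seen as a corollary to Theorem \ref{thm:evals}''), so you have independently reconstructed that corollary-style argument. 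The two routes share the same one-line computation $\boldsymbol{C}^T\boldsymbol{C}=\alpha^2\boldsymbol{I}_1$; what differs is the positivity criterion invoked afterwards. The paper's direct version is slightly more self-contained, since it never needs the Weyl-inequality machinery underlying Theorem \ref{thm:evals}, while yours makes the logical dependence on the singular-value characterization explicit and exposes why the bound $\sigma_k(\boldsymbol{C})=|\alpha|$ is uniform and tight. Two minor points in your favour: you correctly write $\sigma_k(\boldsymbol{C})=|\alpha|$ where the paper's remark sloppily says $\sigma_i(\boldsymbol{C})=\alpha$, and your observation that $\tilde{\boldsymbol{C}}^T\tilde{\boldsymbol{C}}=\boldsymbol{I}_1$ forces $n_1\leq n_2$, consistent with the paper's standing convention, is a sound sanity check the paper leaves implicit.
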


\begin{proof}
    Given the assumed structure on $\boldsymbol{C}$, we have the following:
    \begin{align}
        \boldsymbol{I}_1-\boldsymbol{C}^T\boldsymbol{C}= \boldsymbol{I}_1-\alpha^2\tilde{\boldsymbol{C}}^T\tilde{\boldsymbol{C}}=(1-\alpha^2)\boldsymbol{I}_1.
    \end{align}
    As $\boldsymbol{I}_1$ is positive semi definite, if $|\alpha|\leq1$ then, so too is $\boldsymbol{I}_1-\boldsymbol{C}^T\boldsymbol{C}$.
\end{proof}

Alternatively, this could be seen as a corollary to Theorem \ref{thm:evals} - due to the semi-orthogonality of $\tilde{\boldsymbol{C}}$, all singular values of $\tilde{\boldsymbol{C}}$ are equal to 1 which means $\sigma_i(\boldsymbol{C})=\alpha$ with $|\alpha|\leq1$. Again, if the condition on $\alpha$ is replaced with a strict inequality ($|\alpha|<1)$) this guarantees applicability of Eq. \ref{eq:kl_final}. A simplification of Eq. \ref{eq:kl_final} assuming this orthogonality condition can be found in Appendix \ref{app:orthog_kl}.

%For  to be valid, $\boldsymbol{\Sigma}_C$ needs to be positive definite.
To ensure applicability of Eq. \ref{eq:kl_final}, we can either require (a) $\boldsymbol{C}=\alpha\tilde{\boldsymbol{C}}$ with $\tilde{\boldsymbol{C}}$ an orthogonal matrix and $|\alpha|<1$ or (b) $\sigma_1(\boldsymbol{C})< 1$. 
The latter requires a matrix factorisation of $\boldsymbol{C}$ which includes singular values. The most obvious approach is to use a singular value decomposition (SVD) of $\boldsymbol{C}$ {\em{i.e.}} $\boldsymbol{C}=\boldsymbol{U}\boldsymbol{S}\boldsymbol{V}^T$ where $\boldsymbol{U}$ and $\boldsymbol{V}$ are orthogonal matrices and $\boldsymbol{S}$ is the matrix of singular values i.e. $\boldsymbol{S}=\operatorname{diag}(\sigma_k(\boldsymbol{C}))$. 
These singular values can be parameterized by $\sigma_k(\boldsymbol{C})=(1-\operatorname{exp}(-\sigma_k)/(1+\operatorname{exp}(-\sigma_k))$ for $\sigma_k\in\mathbb{R}$.
To fully parameterize $\boldsymbol{C}$ with the singular value constraint, a parameterization of orthogonal matrices $\boldsymbol{U}$ and $\boldsymbol{V}$ is needed. Similarly, a parameterization of orthogonal $\tilde{\boldsymbol{C}}$ is needed for option (a). The following assumes $n_1=n_2=n$.
%Either $C$ is parameterised via $C=\alpha\tilde{C}$ with $\tilde{C}$ an orthogonal matrix or $C$ is parameterised a matrix with eigenvalues bounded by 1. 
%Both approaches require orthogonal matrices.
%One option is to include a penalty term in the loss function that drives the required matrix towards orthogonality. In the case of an orthogonal $\tilde{\boldsymbol{C}}$ the penalty terms are $\lambda||\tilde{\boldsymbol{C}}\tilde{\boldsymbol{C}}^T-I_2||_F^2=\lambda\operatorname{tr}(\tilde{\boldsymbol{C}}\tilde{\boldsymbol{C}}^T-I_2)^T(\tilde{\boldsymbol{C}}\tilde{\boldsymbol{C}}^T-I_2)$ and $\mu\operatorname{tr}(\tilde{\boldsymbol{C}}^T\tilde{\boldsymbol{C}}-I_1)^T(\tilde{\boldsymbol{C}}^T\tilde{\boldsymbol{C}}-I_1)$ where $||\cdot||_F$ denotes the Frobenious norm. For a sufficiently large $\lambda$ and $\mu$, this will force $\tilde{\boldsymbol{C}}^T\tilde{\boldsymbol{C}}=\boldsymbol{I}_1$ and $\tilde{\boldsymbol{C}}\tilde{\boldsymbol{C}}^T=\boldsymbol{I}_2$. 
%A alternative way is to parameterize the orthogonal matrices. 

% see Appendix \ref{app:param} for the $n_1\neq n_2$ case. 
As discussed by \citet{orthog_param}, there are four popular parameterizations of orthogonal matrices. All methods in their preliminary form parameterize at most a subset of the orthogonal matrices (at maximum those with determinant +1 or -1). These parameterizations must therefore be extended to map to the entire orthogonal matrix space. As it is the only one-to-one mapping, the rational Cayley transform has been chosen for use within JPVAE. 
%Suppose $\boldsymbol{A}\in\mathbb{R}^{m \times m}$ is a skew-symmetric matrix ($\boldsymbol{A}^T=-\boldsymbol{A})$ then $\boldsymbol{O}=(\boldsymbol{I}+\boldsymbol{A})(\boldsymbol{I}-\boldsymbol{A})^{-1}$ is an orthogonal matrix. 
In its original form, the Cayley transform tells us that for all orthogonal matrices with no eigenvalues equal to 1, there exists a unique skew-symmetric matrix $\boldsymbol{A}\in\mathbb{R}^{m \times m}$ such that $\boldsymbol{O}=(\boldsymbol{I}+\boldsymbol{A})(\boldsymbol{I}-\boldsymbol{A})^{-1}$ \cite{orthog_param}. To extend the parameterization to the full space of orthogonal matrices, an extra matrix consisting of 1s and
$-1$s along the diagonal is needed \cite{ferraralgebra, cayley_diagonal}. Let $\boldsymbol{J}=\operatorname{diag}([\mathbf{1}_{m-r}; -\mathbf{1}_{r}])$ where $r=\operatorname{floor}(m/(1+\operatorname{exp}(-s)))$ for $s\in \mathbb{R}$. Then $\boldsymbol{O}=\boldsymbol{J}(\boldsymbol{I}+\boldsymbol{A})(\boldsymbol{I}-\boldsymbol{A})^{-1}$ is a mapping onto the space of orthogonal matrices. $\tilde{\boldsymbol{C}}$ is therefore parameterized in full by $(n-1)^2/2+1$ parameters: $\left(\left\{a_{ij}\in \mathbb{R}: i<j \quad \text{for} \quad i,j\in\{1,\cdots,n\}\right\}, \{s\in\mathbb{R}\}\right)$.

\subsection{Imputation}
Once the model has been learned on the training data, missing views can be imputed. Assume data is available for view $j$, but not view $i$, given $\boldsymbol{x}_j$ we can impute the missing value of $\boldsymbol{x}_i$. Data $\boldsymbol{x}_j$ is fed into the encoder for view $j$, $E_{j\phi}$, and latent variables are sampled giving $\boldsymbol{z}_j=\boldsymbol{a}$. An estimate of $\boldsymbol{z}_i$ can be obtained using the conditional distribution of $\boldsymbol{z}_i$ given $\boldsymbol{z}_j$. This is fed through decoder $i$, $D_{i\theta}$, to produce an estimate of $\boldsymbol{x}_i$ given  $\boldsymbol{x}_j$,  $\tilde{\boldsymbol{x}}_{i|j}$. 

The joint marginal of $\boldsymbol{z}_1$ and $\boldsymbol{z}_2$ is assumed to be a multivariate normal with mean $[\boldsymbol{\mu}_1; \boldsymbol{\mu}_2]$ and covariance matrix $\boldsymbol{\Sigma}$. The maximum likelihood estimator for the mean and covariance are obtained and denoted by $[\hat{\boldsymbol{\mu}_1}; \hat{\boldsymbol{\mu}_2]}$ and $\boldsymbol{\hat{\Sigma}}$ respectively. As any subset of variables from a multivariate normal conditioned on a known second subset of variables also follows a multivariate normal distribution, the conditional distribution can be found explicitly. For $i\neq j$, the distribution of $\boldsymbol{z}_i$ given  $\boldsymbol{z}_j=\boldsymbol{a}$ is observed is:
\begin{equation}
\boldsymbol{z}_i|\boldsymbol{z}_j=\boldsymbol{a}\sim N\left(\hat{\boldsymbol{\mu}}_i+ \boldsymbol{\hat{\Sigma}}_{ij}\hat{\boldsymbol{\Sigma}}_{jj}^{-1}(\boldsymbol{a}-\hat{\boldsymbol{\mu}}_j),\hat{\boldsymbol{\Sigma}}_{ii}-\hat{\boldsymbol{\Sigma}}_{ij}\hat{\boldsymbol{\Sigma}}_{jj}^{-1}\hat{\boldsymbol{\Sigma}}_{ji}\right)
\end{equation}
where $\hat{\boldsymbol{\Sigma}}_{kl}\in\mathbb{R}^{n_k\times n_l}$ is the submatrix of $\hat{\boldsymbol{\Sigma}}$ corresponding to the variables associated with view $k$ and view $l$. The conditional mean $\mathbb{E}(\boldsymbol{z}_i|\boldsymbol{z}_j=\boldsymbol{a})=\hat{\boldsymbol{\mu}}_i +\hat{\boldsymbol{\Sigma}}_{ij}\hat{\boldsymbol{\Sigma}}_{jj}^{-1}(\boldsymbol{a}-\hat{\boldsymbol{\mu}}_j)$ is then used as an estimate of the latent variables in latent space $i$ and fed into $D_{i\theta}$ to obtain $\tilde{\boldsymbol{x}}_{i|j}$, the imputed value of $\boldsymbol{x}_i$ given $\boldsymbol{x}_j$.

\section{Numerical experiments}\label{sec:exp}
Through a series of experiments the performance of JPVAE is explored for both imputation purposes as well as for downstream analyses like classification. As JPVAE enables imputation of missing views ($\tilde{\mathbf{X}}_{i|j}$), this ability is investigated alongside reconstruction of views ($\tilde{\mathbf{X}}_{i}$). 

A multi-view dataset was created from the binary version of the popular MNIST dataset, which consists of handwritten digits from $0$ to $9$ \citep{mnist}. For each image, the top half was taken as view 1 and the bottom half was taken as view 2. This dataset of halved images is referred to as hvdMNIST. The hvdMNIST dataset has the desirable property of having a strong correlation between views. The dataset contains $50,000$ training images and $10,000$ test images. Experiments are repeated with 5 different random seeds and the average and standard deviation reported. Details on the model architecture and training details can be found in Appendices \ref{app:model} and \ref{app:training} respectively.

Two variants of JPVAE are explored, where in each one the validity of $\Sigma_C$ is enforced via different ways. The first variant imposes a bound on singular values (such that $\sigma_1(\boldsymbol{C})<1$). The second variant enforces a scaled orthogonality where $\boldsymbol{C}\boldsymbol{C}^T=\boldsymbol{C}^T\boldsymbol{C}=\alpha^2\boldsymbol{I}$, with the value of $\alpha$ set as $0.95$. The two  variants of JPVAE are compared with the $\boldsymbol{C}=\boldsymbol{0}$ case, which corresponds to completely disjoint VAEs for each view.

Without explicitly learning a correlation structure, correlation is present between the two generated latent spaces. By incorporating a joint prior with a non-zero cross-correlation as presented in Eq. \ref{eq:sigma} into the loss function, JPVAE increases the correlation between views in the latent space as shown in Figure \ref{fig:cor_plots}.  Table \ref{tab:view1_recon} illustrates the improvement in the ability to reconstruct view 2 given view 1 and vice versa when correlation structure is learnt. Enforcing the orthogonality restriction improves the performance of JPVAE compared with simply applying the singular value bound.

\begin{figure*}[!h]
    \centering
    \includegraphics[width=0.9\linewidth]{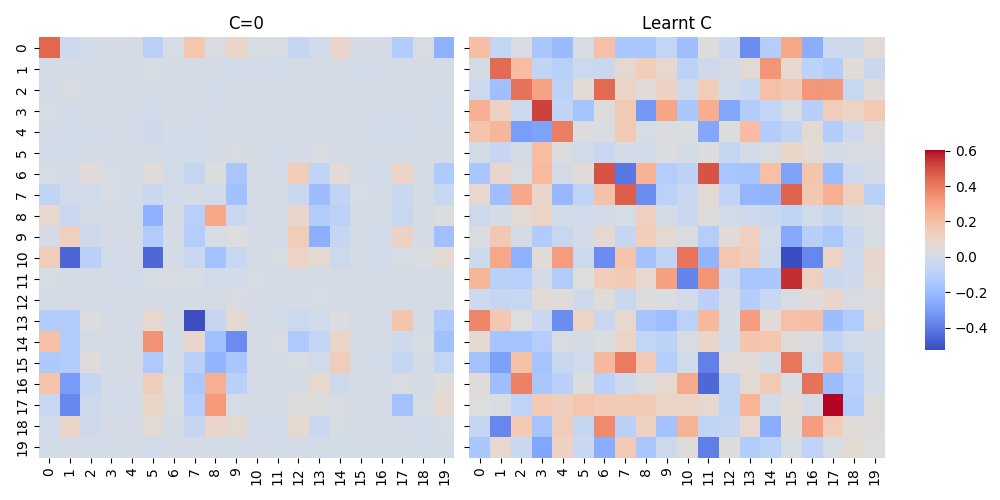}
    
    \caption{Empirical cross-correlation between view 1 and view 2 in the latent spaces. The left plot represents empirical cross-correlation for $\boldsymbol{C}=\boldsymbol{0}$ and the right shows the same for $\boldsymbol{C}$ learnt with the orthogonality restriction imposed. The Frobenius norm of these matrices are 1.703 and 3.448 respectively. %$60\%$ of the units in the latent spaces $\boldsymbol{C}=\boldsymbol{0}$ are active compared to $100\%$ of those when $\boldsymbol{C}$ is learnt.
    }
    
    \label{fig:cor_plots}
\end{figure*}

\begin{table*}[h!]
    \centering
    \caption{Average reconstruction losses across the entire dataset. Best results in bold, standard deviation in brackets.}
    \renewcommand{\arraystretch}{1.3}
    
    \begin{tabular}{lccccc}
    \hline
    &\multicolumn{2}{c}{Reconstruction}&&\multicolumn{2}{c}{Imputation} \\
    \cmidrule{2-3}\cmidrule{5-6}\\[-1.5em]
    & $\tilde{\boldsymbol{X}}_1$  & $\tilde{\boldsymbol{X}}_2$ & &$\tilde{\boldsymbol{X}}_{1|2}$   & $\tilde{\boldsymbol{X}}_{2|1}$ \\
    \hline 
    $\boldsymbol{C}=\boldsymbol{0}$ & 24.64  (0.37) & 25.56 (0.24)& &114.1 (2.3)   &  127.5 (3.2)\\
    $\sigma_1(\boldsymbol{C})<1$ & 24.08  (0.44) &25.02 (0.21) & &106.6 (1.4)& 117.4 (4.1) \\
$\boldsymbol{C}\boldsymbol{C}^T=\boldsymbol{C}^T\boldsymbol{C}=\alpha^2\boldsymbol{I}$ &  \textbf{23.41} (0.29) &  \textbf{23.98} (0.31) & & \textbf{97.25} (1.9)& \textbf{106.6} (1.9)\\   
    \hline 
    \end{tabular}

    \label{tab:view1_recon}
\end{table*}

The improved performance of the method is enabled by the learnt correlation that prevents posterior collapse. %Posterior collapse corresponds to the phenomenon where for some of the latent variables the posterior is equal to the prior and they therefore do not encode any signal. 
Following the work by \cite{active_units}, the phenomenon of posterior collapse is indicated by the percentage of active units (AU). The activity of unit $u$ in the latent space is measured by $\text{A}_u=\operatorname{Cov}_{\boldsymbol{x}}\left(\mathbb{E}_{u\sim q (u | \boldsymbol{x})}[u]\right)$. A unit is considered active, {\em i.e.} to not have suffered from posterior collapse, if $A_u \geq 10^{-2}$. %The percentage of active units then indicates the level of posterior collapse present, with higher number of AUs indicating proportionally less posterior collapse present. 
A higher percentage of AUs are preserved when $\boldsymbol{C}$ is learnt, with the best case scenario observed with the orthogonality constraint (Table \ref{tab:active_units}).
\citet{posterior_collapse} proved that posterior collapse is equivalent to latent variable non-identifiability.  This indicates that by enforcing the orthogonality restriction, we make the latent variable space identifiable. 

%It also helps prevent posterior collapse evidenced by higher number of active units when $C$ is learnt (Table \ref{tab:active_units}). 

\begin{table}[h!]
    \centering
        \caption{The percentage (\%) of active units (AU) across latent spaces $(\boldsymbol{z}_1,\boldsymbol{z}_2)$, out of a total 40. Best results in bold, standard deviation in brackets.}
    \renewcommand{\arraystretch}{1.3}
    
    \begin{tabular}{lccccc}
    \hline
    & AU  \\
    \hline 
    $\boldsymbol{C}=\boldsymbol{0}$ & 61 (1.4) \\ 
    $\sigma_1(\boldsymbol{C})<1$ & 66 (1.4)\\
     $\boldsymbol{C}\boldsymbol{C}^T=\boldsymbol{C}^T\boldsymbol{C}=\alpha^2\boldsymbol{I}$ &  \textbf{98.5} (2.2) \\
    \hline
    \end{tabular}

    \label{tab:active_units}
\end{table}

The imputed views, $\tilde{\boldsymbol{X}}_{1|2}$ and $\tilde{\boldsymbol{X}}_{2|1}$, can be used for downstream tasks. If not all views are available for an individual, this allows techniques requiring all views to be applied. %Additionally multi-view methods which don't allow for missing data can be applied with a larger sample size. 
As an illustration of the performance of the imputed data in downstream tasks, a basic multi-layer perceptron classifier was trained and tested on different combinations of reconstructed and imputed views. The performance of a classifier trained on the reconstructed data indicates that when $\boldsymbol{C}$ is learnt, the relevant signal remains present in both the reconstructed and the imputed data. The performance on imputed data is greatly improved by learning $\boldsymbol{C}$, and in particular by enforcing the orthogonality constraint (Table \ref{tab:overall_acc}; results with standard deviation can be found in Figure \ref{fig:overall_accuracy}.).

\begin{table}[!h]
    \centering
    
      \caption{Results for $(\boldsymbol{Y},\boldsymbol{Z})$ represent classification accuracy \% for model trained on the training split of $\boldsymbol{Y}$ and tested on the test split of $\boldsymbol{Z}$. Accuracies for $(\boldsymbol{X}_1,\boldsymbol{X}_1)$ and $(\boldsymbol{X}_2,\boldsymbol{X}_2)$ with standard deviation in brackets are $93.59\% \ (0.25)$  and  $90.83\% \ (0.23)$ respectively. Best results in bold. For clarity, results with standard deviation reported can be found in Figure \ref{fig:overall_accuracy}.}
      
    \renewcommand{\arraystretch}{1.3}
    \setlength{\tabcolsep}{1pt}
    
    \begin{tabular}{lccccccc}
    \hline
    &&View 1&&&& View 2& \\
    \cmidrule{2-4}\cmidrule{6-8}\\[-1.1em]
    & $(\tilde{\boldsymbol{X}}_1, \tilde{\boldsymbol{X}}_1)$ &  $(\tilde{\boldsymbol{X}}_1, \tilde{\boldsymbol{X}}_{1|2})$ &$(\tilde{\boldsymbol{X}}_{1|2}, \tilde{\boldsymbol{X}}_{1|2})$ & & $(\tilde{\boldsymbol{X}}_2, \tilde{\boldsymbol{X}}_2)$  &  $(\tilde{\boldsymbol{X}}_2, \tilde{\boldsymbol{X}}_{2|1})$   & $(\tilde{\boldsymbol{X}}_{2|1},\tilde{\boldsymbol{X}}_{2|1})$\\[0.1em]
    \hline 
    \renewcommand{\arraystretch}{1.3}
    $\boldsymbol{C}=\boldsymbol{0}$ & 89.07 & 47.21 & 77.22& & 85.88 & 51.83  & 78.81 \\
    $\sigma_1(\boldsymbol{C})<1$ & 89.67 &  64.50 & 80.22 & & 86.46 & 67.06 & 82.38\\
$\boldsymbol{C}\boldsymbol{C}^T=\boldsymbol{C}^T\boldsymbol{C}=\alpha^2\boldsymbol{I}$ &  \textbf{90.31} & \textbf{77.22} & \textbf{83.54} & &  \textbf{87.80}  & \textbf{76.55}  & \textbf{86.30}\\
    \hline
    \end{tabular}
  
    \label{tab:overall_acc}
\end{table}

Not only does learning correlation structure improve the ability to impute data, the reconstruction loss and classification accuracy for reconstructed data $\boldsymbol{x}_i$ is improved compared with those scores seen when training each view on separate VAEs (Table \ref{tab:view1_recon}). This may be due to the increased use of the latent space, as evidenced by the higher percentage of active units. Whilst a classifier trained on the imputed data from $\boldsymbol{C}=\boldsymbol{0}$ demonstrates the retention of signal, the low accuracy seen for a classifier trained on reconstructed data and high reconstruction loss indicates that it does not retain the specific signature of the input. For example, taking the top half of a digit `2' as the input, it may correctly reconstruct the bottom half of a realisation of the digit `2' but not correctly reconstruct the specific realisation (as seen in Figure \ref{fig:no_c_digits}).

Using the joint prior we see that the view with stronger signal (view 1) is bolstering the classification of the view with weaker signal (view 2). Whilst classification accuracy on $\boldsymbol{x}_1$ is higher than that on $\boldsymbol{x}_2$, the accuracy on the imputed data $\tilde{\boldsymbol{X}}_{2|1}$ sees a smaller drop, and is greater than that of $\tilde{\boldsymbol{X}}_{1|2}$. Notably, the accuracy on  imputed data $(\tilde{\boldsymbol{X}}_{2|1}, \tilde{\boldsymbol{X}}_{2|1})$ is comparable to that on $(\tilde{\boldsymbol{X}}_2, \tilde{\boldsymbol{X}}_2)$ whilst the same for view 1 experiences a drop in performance. 

\section{Conclusions}
A novel multi-view VAE approach has been proposed that natively strengthens the correlation between latent spaces via a joint prior. This is the first time that a connection between multi-view VAEs is made through a joint prior rather than a joint posterior, as has previously been implemented in the literature. Theoretical guarantees and parameterizations are presented that allow for end-to-end learning. By simultaneously preventing posterior collapse, JPVAE returns superior models and demonstrates a promising ability to impute missing data suitable for downstream tasks. %Future work includes comparison with other multi-view imputation techniques and investigating the extension of the model to more than two views.  
%Limitations?
\newpage
\bibliography{ref}

\appendix
% \newpage
% \setcounter{section}{0}
\section{Appendix / supplemental material}
The Appendix contains additional results and figures to supplement the main body of text. 

Theoretical results used within the manuscript are presented in Section \ref{app:theoretical_results}. This is followed by a simplification of the KL term when orthogonality constraints are assumed which is outlined in Section \ref{app:orthog_kl}. The model architecture and training details implemented in the numerical experiments are detailed in Sections \ref{app:training} and \ref{app:model} respectively. Lastly, additional results from the conducted numerical experiments can be found in Section \ref{app:further_res}. All results throughout this manuscript are given to 4 significant figures, with 
standard deviations reported to 2 significant figures.  

\subsection{Theoretical results}\label{app:theoretical_results}
Theoretical results utilised within the paper are now presented. 
\subsubsection{Block matrix inverse}\label{app:block_mat}
The inverse of a block matrix, which is needed in Section \ref{sec:kl_term} to determine the inverse of $\boldsymbol{\Sigma}_C$ in order to derive the KL term, is now outlined.

For a matrix $\boldsymbol{A}$ with block matrix structure 
\begin{equation}\label{eq:block_mat}
     \begin{pmatrix} \boldsymbol{A}_{11} & \boldsymbol{A}_{12}  \\  \boldsymbol{A}_{21} &  \boldsymbol{A}_{22} \end{pmatrix} 
\end{equation}
the following result holds \cite[p. 18]{matrix_analysis}:
\begin{lemma}
    Assuming all relevant inverses exist, the inverse of $\boldsymbol{A}$ as defined in Eq. \ref{eq:block_mat} is given by:
    \begin{equation}
        \begin{pmatrix} (\boldsymbol{A}_{11}-\boldsymbol{A}_{12}\boldsymbol{A}_{22}^{-1}\boldsymbol{A}_{21})^{-1} & \boldsymbol{A}_{11}^{-1}\boldsymbol{A}_{12}(\boldsymbol{A}_{21}\boldsymbol{A}_{11}^{-1}\boldsymbol{A}_{12}-\boldsymbol{A}_{22})^{-1}   \\  \boldsymbol{A}_{22}^{-1}\boldsymbol{A}_{21}(\boldsymbol{A}_{12}\boldsymbol{A}_{22}^{-1}\boldsymbol{A}_{21}-\boldsymbol{A}_{11})^{-1} &  (\boldsymbol{A}_{22}-\boldsymbol{A}_{21}\boldsymbol{A}_{11}^{-1}\boldsymbol{A}_{12})^{-1} \end{pmatrix}.
    \end{equation}
\end{lemma}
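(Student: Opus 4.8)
The plan is to prove the identity constructively via a block LDU (Schur-complement) factorization, computing $\boldsymbol{A}^{-1}$ by inverting elementary block-triangular factors. First I would factor $\boldsymbol{A}$ by pivoting on $\boldsymbol{A}_{11}$,
\[
\begin{pmatrix} \boldsymbol{A}_{11} & \boldsymbol{A}_{12} \\ \boldsymbol{A}_{21} & \boldsymbol{A}_{22} \end{pmatrix} = \begin{pmatrix} \boldsymbol{I} & \boldsymbol{0} \\ \boldsymbol{A}_{21}\boldsymbol{A}_{11}^{-1} & \boldsymbol{I} \end{pmatrix}\begin{pmatrix} \boldsymbol{A}_{11} & \boldsymbol{0} \\ \boldsymbol{0} & \boldsymbol{S}_1 \end{pmatrix}\begin{pmatrix} \boldsymbol{I} & \boldsymbol{A}_{11}^{-1}\boldsymbol{A}_{12} \\ \boldsymbol{0} & \boldsymbol{I} \end{pmatrix},
\]
where $\boldsymbol{S}_1 = \boldsymbol{A}_{22} - \boldsymbol{A}_{21}\boldsymbol{A}_{11}^{-1}\boldsymbol{A}_{12}$ is the Schur complement of $\boldsymbol{A}_{11}$. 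Each factor is trivially invertible (a unit-diagonal triangular factor inverts by negating its off-diagonal block, the middle factor inverts blockwise), so reversing the order and multiplying out gives a first expression for $\boldsymbol{A}^{-1}$. From it I would read off the second column of blocks, namely the $(1,2)$ entry $-\boldsymbol{A}_{11}^{-1}\boldsymbol{A}_{12}\boldsymbol{S}_1^{-1}$ and the $(2,2)$ entry $\boldsymbol{S}_1^{-1}$; using $(\boldsymbol{A}_{21}\boldsymbol{A}_{11}^{-1}\boldsymbol{A}_{12} - \boldsymbol{A}_{22})^{-1} = -\boldsymbol{S}_1^{-1}$ these match the stated $(1,2)$ and $(2,2)$ blocks exactly.

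Second, I would repeat the factorization pivoting instead on $\boldsymbol{A}_{22}$, with Schur complement $\boldsymbol{S}_2 = \boldsymbol{A}_{11} - \boldsymbol{A}_{12}\boldsymbol{A}_{22}^{-1}\boldsymbol{A}_{21}$, and invert analogously. This yields a second expression for $\boldsymbol{A}^{-1}$ whose first column of blocks is the $(1,1)$ entry $\boldsymbol{S}_2^{-1}$ and the $(2,1)$ entry $-\boldsymbol{A}_{22}^{-1}\boldsymbol{A}_{21}\boldsymbol{S}_2^{-1}$; again using $(\boldsymbol{A}_{12}\boldsymbol{A}_{22}^{-1}\boldsymbol{A}_{21} - \boldsymbol{A}_{11})^{-1} = -\boldsymbol{S}_2^{-1}$ these coincide with the stated $(1,1)$ and $(2,1)$ blocks. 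Since the inverse of $\boldsymbol{A}$ is unique, the two expressions denote the same matrix, so each of its four blocks may be written in whichever of the two equivalent forms is convenient. Selecting the $\boldsymbol{A}_{22}$-pivot forms for the first column and the $\boldsymbol{A}_{11}$-pivot forms for the second column reproduces the claimed matrix precisely.

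The point requiring care — and what I regard as the crux — is exactly this reconciliation: the stated inverse is a \emph{mixed} presentation in which complementary blocks are written via different Schur complements, so no single pivot factorization produces all four entries in the displayed form. Recognizing that one must combine both factorizations (equivalently, invoke the push-through identity relating the $\boldsymbol{S}_1^{-1}$ and $\boldsymbol{S}_2^{-1}$ forms of the off-diagonal blocks) is the only nonroutine step; everything else is block multiplication. As a fully self-contained alternative that avoids appealing to two separate factorizations, I would instead verify the result directly by forming the product of $\boldsymbol{A}$ with the claimed matrix and checking blockwise that it equals $\mathbf{I}$, each of the four resulting block equations collapsing once the definitions of $\boldsymbol{S}_1$ and $\boldsymbol{S}_2$ and the cancellations $\boldsymbol{A}_{22}\boldsymbol{A}_{22}^{-1} = \mathbf{I}$ and $\boldsymbol{A}_{11}\boldsymbol{A}_{11}^{-1} = \mathbf{I}$ are substituted.
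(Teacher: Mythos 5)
Your proposal is correct, but it is worth noting that the paper itself contains no proof of this lemma: it is stated as a quoted result with a citation to Horn and Johnson (p.~18) and then specialised with $\boldsymbol{A}_{11}=\boldsymbol{I}_1$, $\boldsymbol{A}_{22}=\boldsymbol{I}_2$, $\boldsymbol{A}_{12}=\boldsymbol{C}^T$, $\boldsymbol{A}_{21}=\boldsymbol{C}$ to obtain Eq.~\ref{eq:sigma_inv}. You therefore supply an actual argument where the paper defers to a reference, and your argument is sound: the two block-LDU factorizations are standard, each triangular factor inverts by negating its off-diagonal block, and uniqueness of the inverse licenses the mixed presentation; you are right that this mixing (first column of blocks in $\boldsymbol{S}_2$-form, second column in $\boldsymbol{S}_1$-form) is the one non-mechanical point, since no single pivot produces all four displayed blocks verbatim. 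Your direct-verification fallback is in fact the quickest self-contained route here, and it works exactly as you claim: with $\boldsymbol{B}$ the claimed matrix, the blocks of $\boldsymbol{A}\boldsymbol{B}$ collapse columnwise, \emph{e.g.} the $(1,1)$ block is $\boldsymbol{A}_{11}\boldsymbol{S}_2^{-1}-\boldsymbol{A}_{12}\boldsymbol{A}_{22}^{-1}\boldsymbol{A}_{21}\boldsymbol{S}_2^{-1}=\boldsymbol{S}_2\boldsymbol{S}_2^{-1}=\boldsymbol{I}$ and the $(1,2)$ block is $-\boldsymbol{A}_{12}\boldsymbol{S}_1^{-1}+\boldsymbol{A}_{12}\boldsymbol{S}_1^{-1}=\boldsymbol{0}$; since the matrices are square and finite-dimensional, the one-sided check $\boldsymbol{A}\boldsymbol{B}=\boldsymbol{I}$ already forces $\boldsymbol{B}=\boldsymbol{A}^{-1}$ (or one can note your LDU factorization independently certifies invertibility of $\boldsymbol{A}$ under the stated hypotheses). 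As for what each approach buys: the paper's citation keeps the appendix short, while your constructive route explains where the formula comes from and, usefully in this paper's context, makes transparent the push-through identity $(\boldsymbol{I}_1-\boldsymbol{C}^T\boldsymbol{C})^{-1}\boldsymbol{C}^T=\boldsymbol{C}^T(\boldsymbol{I}_2-\boldsymbol{C}\boldsymbol{C}^T)^{-1}$ that reconciles the lemma's off-diagonal blocks with the $-\boldsymbol{D}_1\boldsymbol{C}^T$ and $-\boldsymbol{D}_2\boldsymbol{C}$ forms actually used in Eq.~\ref{eq:sigma_inv}.
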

Applying this with $\boldsymbol{A}_{ii}=\boldsymbol{I}_{i}$, $\boldsymbol{A}_{12}=\boldsymbol{C}^T$ and $\boldsymbol{A}_{21}=\boldsymbol{C}$ gives the result in the text.
\subsubsection{Weyl's inequality}\label{app:weyl}
The following inequality is used within the proof of Theorem \ref{thm:evals} and concerns the sequence of eigenvalues of matrices. In contrast to the rest of the paper, the eigenvalues are indexed in non-increasing order, not non-increasing order of magnitude. For matrix $\boldsymbol{M}$ these are denoted by $\{\hat{\lambda}_{j}(\boldsymbol{M})\}_{j=1}^n$. 

Weyl's inequality \cite[p. ~242, Corrollary 4.3.15]{matrix_analysis} says:
\begin{lemma}
    Let $\boldsymbol{A}, \boldsymbol{B} \in \mathbb{R}^{n\times n}$ be symmetric matrices. The following inequality holds
    \begin{equation}
        \hat{\lambda}_{j}(\boldsymbol{A})+\hat{\lambda}_{1}(\boldsymbol{B}) \leq \hat{\lambda}_{j}(\boldsymbol{A}+\boldsymbol{B}) \leq \hat{\lambda}_{j}(\boldsymbol{A}) + \hat{\lambda}_{n}(\boldsymbol{B}), \quad j=1,\dots,n
    \end{equation}
\end{lemma}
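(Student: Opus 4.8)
The plan is to prove this statement, which is Weyl's inequality, via the Courant--Fischer min--max variational characterization of the eigenvalues of a symmetric matrix. Throughout, $\hat{\lambda}_1(\boldsymbol{M})$ denotes the smallest and $\hat{\lambda}_n(\boldsymbol{M})$ the largest eigenvalue of a symmetric $\boldsymbol{M}$, with the intermediate eigenvalues ordered accordingly; in particular the extreme eigenvalues are the global minimum and maximum of the Rayleigh quotient $R_{\boldsymbol{M}}(\boldsymbol{x}) = \boldsymbol{x}^T\boldsymbol{M}\boldsymbol{x}/\boldsymbol{x}^T\boldsymbol{x}$. The tool I would invoke is Courant--Fischer: for each $j$,
\begin{equation*}
\hat{\lambda}_j(\boldsymbol{M}) = \min_{\substack{S\subseteq\mathbb{R}^n \\ \dim S = j}}\ \max_{\substack{\boldsymbol{x}\in S \\ \boldsymbol{x}\neq\boldsymbol{0}}}\ R_{\boldsymbol{M}}(\boldsymbol{x}),
\end{equation*}
with the minimum attained on the span $S_j$ of eigenvectors of $\boldsymbol{M}$ associated with its $j$ smallest eigenvalues. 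This is the classical precursor to Weyl's inequality in the same reference, and the whole argument reduces to combining it with the elementary additivity $R_{\boldsymbol{A}+\boldsymbol{B}} = R_{\boldsymbol{A}} + R_{\boldsymbol{B}}$ of the Rayleigh quotient.

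First I would establish the upper bound. Applying the min--max formula to $\boldsymbol{A}+\boldsymbol{B}$ and testing it on the specific $j$-dimensional subspace $S_j$ that is optimal for $\boldsymbol{A}$ gives $\hat{\lambda}_j(\boldsymbol{A}+\boldsymbol{B}) \leq \max_{\boldsymbol{x}\in S_j} R_{\boldsymbol{A}+\boldsymbol{B}}(\boldsymbol{x})$. Splitting the Rayleigh quotient and using subadditivity of the maximum, $\max_{\boldsymbol{x}\in S_j}(R_{\boldsymbol{A}}(\boldsymbol{x})+R_{\boldsymbol{B}}(\boldsymbol{x})) \leq \max_{\boldsymbol{x}\in S_j}R_{\boldsymbol{A}}(\boldsymbol{x}) + \max_{\boldsymbol{x}\in S_j}R_{\boldsymbol{B}}(\boldsymbol{x})$, where the first term equals $\hat{\lambda}_j(\boldsymbol{A})$ by optimality of $S_j$, and the second is bounded by the global maximum $\max_{\boldsymbol{x}\neq\boldsymbol{0}}R_{\boldsymbol{B}}(\boldsymbol{x}) = \hat{\lambda}_n(\boldsymbol{B})$. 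Chaining these yields $\hat{\lambda}_j(\boldsymbol{A}+\boldsymbol{B}) \leq \hat{\lambda}_j(\boldsymbol{A}) + \hat{\lambda}_n(\boldsymbol{B})$.

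The lower bound I would obtain not by a second variational argument but by a clean substitution into the upper bound just proved. Applying $\hat{\lambda}_j(\boldsymbol{M}+\boldsymbol{N}) \leq \hat{\lambda}_j(\boldsymbol{M}) + \hat{\lambda}_n(\boldsymbol{N})$ with $\boldsymbol{M}=\boldsymbol{A}+\boldsymbol{B}$ and $\boldsymbol{N}=-\boldsymbol{B}$ gives $\hat{\lambda}_j(\boldsymbol{A}) \leq \hat{\lambda}_j(\boldsymbol{A}+\boldsymbol{B}) + \hat{\lambda}_n(-\boldsymbol{B})$. Since the spectrum of $-\boldsymbol{B}$ is the negated spectrum of $\boldsymbol{B}$ with the ordering reversed, $\hat{\lambda}_n(-\boldsymbol{B}) = -\hat{\lambda}_1(\boldsymbol{B})$, and rearranging produces $\hat{\lambda}_j(\boldsymbol{A}) + \hat{\lambda}_1(\boldsymbol{B}) \leq \hat{\lambda}_j(\boldsymbol{A}+\boldsymbol{B})$, completing the two-sided inequality. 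The only genuinely delicate point, and the step I would treat most carefully, is the identity $\max_{\boldsymbol{x}\in S_j}R_{\boldsymbol{A}}(\boldsymbol{x}) = \hat{\lambda}_j(\boldsymbol{A})$: it relies on $S_j$ being exactly the eigenvector span for the $j$ smallest eigenvalues, so that restricting $\boldsymbol{A}$ to $S_j$ has largest eigenvalue $\hat{\lambda}_j(\boldsymbol{A})$; every other step is a routine consequence of Courant--Fischer and the linearity of the quadratic form. (Everything is stated for the non-decreasing ordering that makes $\hat{\lambda}_1$ the minimal and $\hat{\lambda}_n$ the maximal eigenvalue, consistent with the cited source.)
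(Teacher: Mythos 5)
Your proof is correct. For comparison: the paper does not actually prove this lemma---it is quoted as a known result with a citation to Horn and Johnson's \emph{Matrix Analysis} (p.~242, Corollary 4.3.15)---so your Courant--Fischer argument supplies the proof that the paper outsources, and it is essentially the standard argument underlying that citation. Both halves check out: for the upper bound you test the min--max characterization of $\hat{\lambda}_j(\boldsymbol{A}+\boldsymbol{B})$ on the $j$-dimensional eigenvector span $S_j$ optimal for $\boldsymbol{A}$, and the identity $\max_{\boldsymbol{x}\in S_j,\,\boldsymbol{x}\neq\boldsymbol{0}} R_{\boldsymbol{A}}(\boldsymbol{x}) = \hat{\lambda}_j(\boldsymbol{A})$ that you single out as the delicate step is indeed valid (with repeated eigenvalues $S_j$ is merely one admissible choice of eigenvector span, which is all you need, since any $j$-dimensional subspace upper-bounds the minimum). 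Your derivation of the lower bound by substituting $\boldsymbol{M}=\boldsymbol{A}+\boldsymbol{B}$, $\boldsymbol{N}=-\boldsymbol{B}$ into the already-proved upper bound, together with $\hat{\lambda}_n(-\boldsymbol{B})=-\hat{\lambda}_1(\boldsymbol{B})$, is a genuine economy: the usual textbook route runs a second, dual (max--min) variational argument, whereas you obtain the two-sided inequality from a single one. One further point in your favour: the inequality as stated is only true under your non-decreasing ordering ($\hat{\lambda}_1$ smallest, $\hat{\lambda}_n$ largest), whereas the paper's surrounding text says the eigenvalues are ``indexed in non-increasing order,'' under which the roles of $\hat{\lambda}_1(\boldsymbol{B})$ and $\hat{\lambda}_n(\boldsymbol{B})$ would have to be swapped; your convention is the one that makes the lemma correct and matches the cited source, and the paper's phrasing appears to be a typo.
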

Notice that $\lambda_k(\boldsymbol{I}_1)=\hat{\lambda}_j(\boldsymbol{I}_1)=1$ for all $j,k$, and that as $-\boldsymbol{C}^T\boldsymbol{C}$ is negative semi-definite all eigenvalues are non-positive. This means $\hat{\lambda}_{j}(-\boldsymbol{C}^T\boldsymbol{C})=\lambda_{n-j}(-\boldsymbol{C}^T\boldsymbol{C})$. Applying this lemma with $\boldsymbol{A}=-\boldsymbol{C}^T\boldsymbol{C}$ and $\boldsymbol{B}=\boldsymbol{I}_1$ gives Eq. \ref{eq:evals1}. 

\subsection{KL term simplification}\label{app:orthog_kl}
Under the orthogonality assumption on $\boldsymbol{C}$, the KL term derived in Eq. \ref{eq:kl_final} can be simplified. 
Specifically, if $\boldsymbol{C}\boldsymbol{C}^T=\boldsymbol{C}^T\boldsymbol{C}=\alpha^2\boldsymbol{I}$ for some $\alpha \in (0,1)$ then $\boldsymbol{D}_1=\boldsymbol{D}_2=\gamma \boldsymbol{I}$  where $\boldsymbol{I}=\boldsymbol{I}_1=\boldsymbol{I}_2$ and $\gamma = 1/(1-\alpha^2)$. Therefore Eq. \ref{eq:kl_final} reduces to:
\begin{align}\label{eq:kl_orthog}
    D_{K L}(q_{\phi_1}({\cdot|\boldsymbol{x}_1})q_{\phi_2}({\cdot|\boldsymbol{x}_1}) || p_{\boldsymbol{C}})
    &= \frac{1}{2}\left[\gamma\boldsymbol{\mu}_1^T\boldsymbol{\mu}_1- \operatorname{ln}\left|\boldsymbol{\Sigma}_1\right| - n +\gamma\operatorname{tr}\left\{\boldsymbol{\Sigma}_1 \right\} \right]\\
    &\quad \quad + \frac{1}{2}\left[\gamma\boldsymbol{\mu}_2^T\boldsymbol{\mu}_2- \operatorname{ln}\left|\boldsymbol{\Sigma}_2\right| - n +\gamma\operatorname{tr}\left\{\boldsymbol{\Sigma}_2 \right\}\right] \notag \\
    & \quad \quad \quad  -\frac{1}{2}\left[n\operatorname{ln} \left|\gamma\right| + 2\gamma\boldsymbol{\mu}_1^T\boldsymbol{C\mu}_2\right].\notag
\end{align}

%\subsection{parameterization}\label{app:param}
%Here a parameterization for the $n_1\neq n_2$ case, specifically for the Stiefel manifold $\gamma(n_2,n_1)=\{\boldsymbol{Q}\in\mathbb{R}^{n_2\times n_1}|\boldsymbol{Q}^T\boldsymbol{Q}=\boldsymbol{I}_k\}$, is considered. 
\subsection{Model architecture}\label{app:model}
All encoders and decoders for the JPVAE, as well as the neural network used for classification, consist of two dense layers with 512 units each. The latent distribution layers consists of two 20 unit dense layers which each parameterize the mean and the log of the variances of 20 normal random variables. The decoders output layer parameterizes the distribution of a Bernouilli random variable for each pixel. The classifier output layer parameterizes the categorical distribution with 10 outcomes .All activation functions were ReLu. 

\subsection{Training details}\label{app:training}
The JPVAE used an Adam optimiser \cite{adam} with learning rate 0.001, trained for 30 epochs with a batch size of 32 and used binary cross entropy as the reconstruction error. The cyclical KL annealing schedule as introduced by \cite{cyclical_kl} is implemented, with $M=30$. 

The classifier is trained for 50 epochs with a batch size of 32, step size of 0.01, except for on original data where it is trained for 15 epochs to prevent overfitting. Cross entropy loss is used as the loss function.

\pagebreak
\clearpage
\newpage

\subsection{Additional results}\label{app:further_res}
Additional results are presented in this section. 

Figure \ref{fig:digits_comparison2} illustrates examples of reconstructing view 2 from view 1, with and without learning correlation natively, for a model trained with a different random seed to that displayed in Figure \ref{fig:digits_comparison}.  Figures \ref{fig:digits_v1} and \ref{fig:digits_v1_2} illustrate examples of reconstructing view 1 from view 2, again with and without learning correlation natively, for models trained with different random seeds. As in Figure \ref{fig:digits_comparison}, the imputation of the missing view obtained when correlation of the latent spaces is learnt natively is of higher quality than when no correlation is learnt. To quantitively evaluate this superior performance, a simple multi-layer perceptron classifier is trained the concatenation of the reconstructed views. It is then tested on a combination of the reconstructed and imputed views. Explicitly, for datasets where {\em e.g.} view 2 is imputed from view 1, the classifier is trained on (the training split of) $[\tilde{\boldsymbol{X}}_1;\tilde{\boldsymbol{X}}_2]$ and tested on (the testing split of) $[\tilde{\boldsymbol{X}}_1;\tilde{\boldsymbol{X}}_{2|1}]$.  Results of this classification task for various test datasets are displayed in Figure
\ref{fig:concatenation_accuracy}. Results for the classification task described in the main body of the text, and presented in Table \ref{tab:overall_acc}, are illustrated in Figure \ref{fig:overall_accuracy} with standard deviation incorporated via error bars. 
\begin{figure}[!h]
\centering
     \begin{subfigure}[t]{0.4\textwidth}
         \centering
         \includegraphics[width=\textwidth]{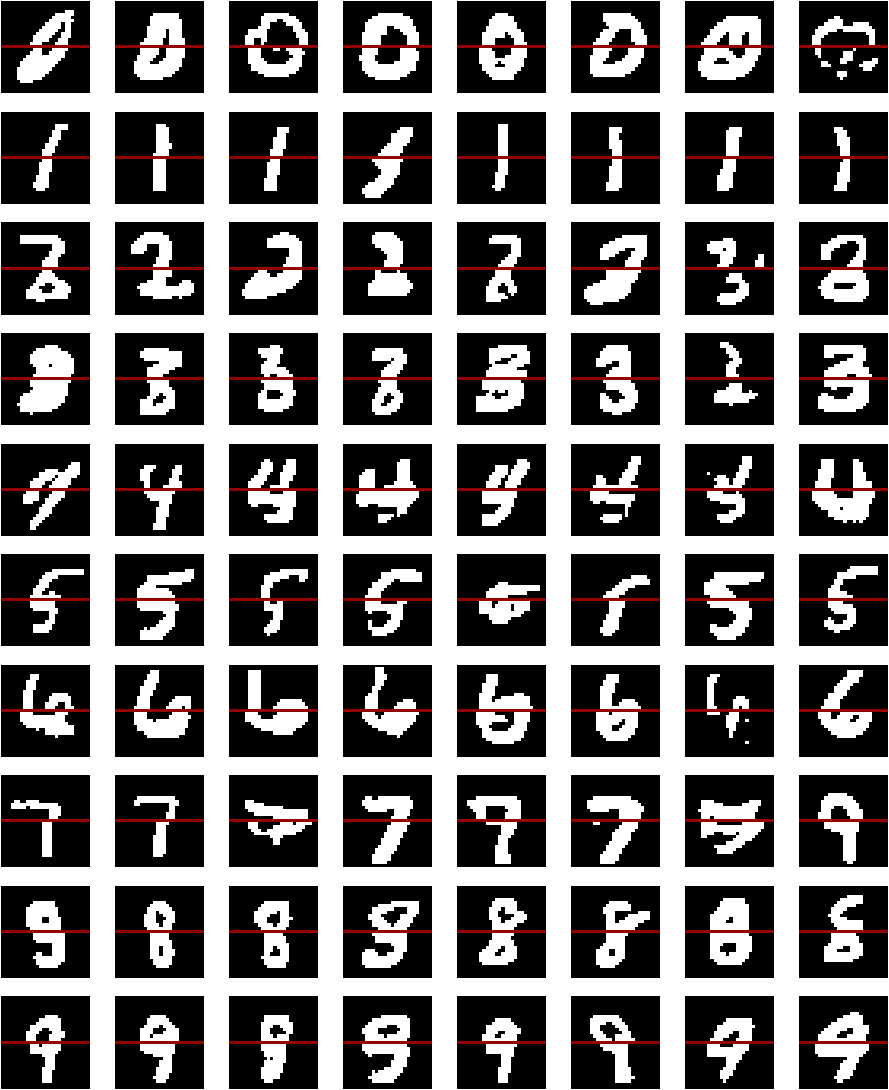}
         \caption{No correlation learnt, but estimated empirically after training}
        \label{fig:no_c_digits2}
     \end{subfigure}
     \hspace{2em}
\begin{subfigure}[t]{0.4\textwidth}
         \centering
\includegraphics[width=\textwidth]{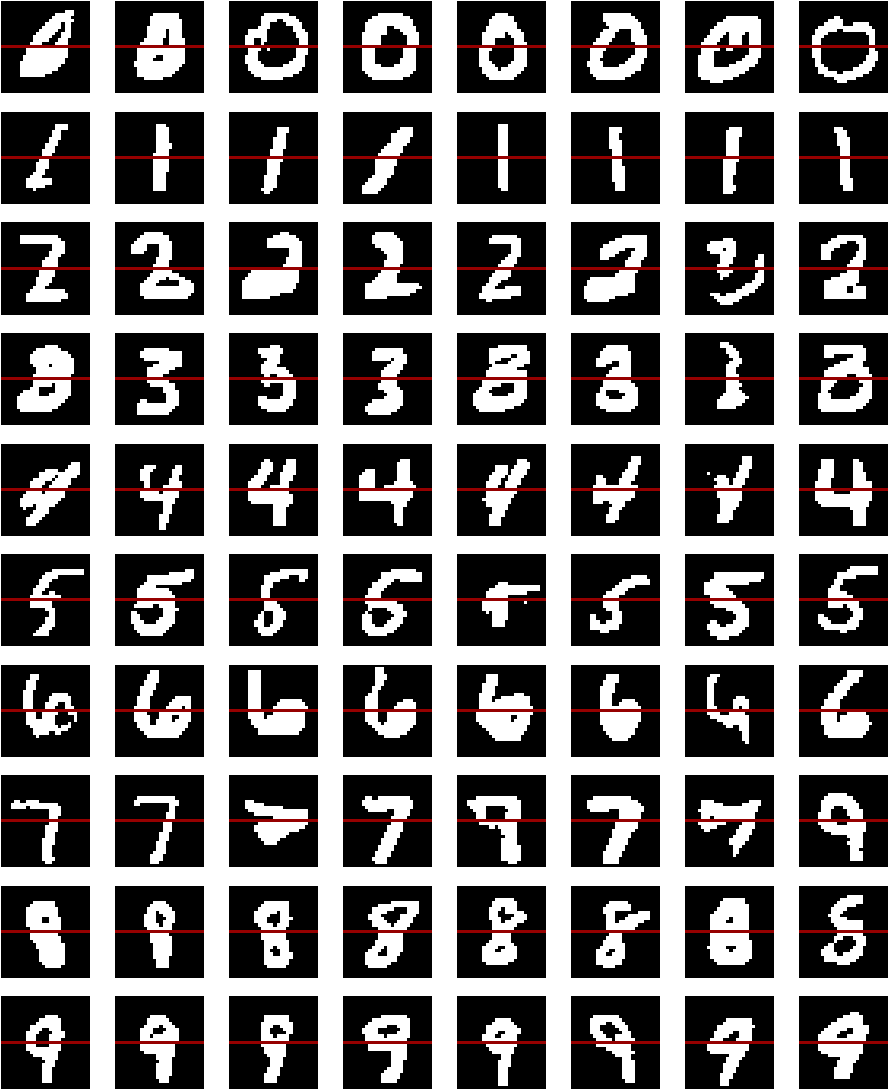}
    \caption{Correlation learnt natively}
    \label{fig:digits_comp2}
     \end{subfigure}
        \caption{Additional realisation of an imputation of the bottom half of MNIST digits (view 2 of the data) using the top half of the image (view 1) on a JPVAE model trained with (a) independent priors (completely separate VAEs) and (b) a joint prior with learnt correlation structure between latent spaces. %Including correlation structure in the loss function improves ability to impute views; 
        The cross entropy loss between true bottom half of image and imputation is 111.9 in (a) and 101.5 in (b).}
        \label{fig:digits_comparison2}
\end{figure}

\begin{figure}[!h]
\centering
     \begin{subfigure}[t]{0.4\textwidth}
         \centering
         \includegraphics[width=\textwidth]{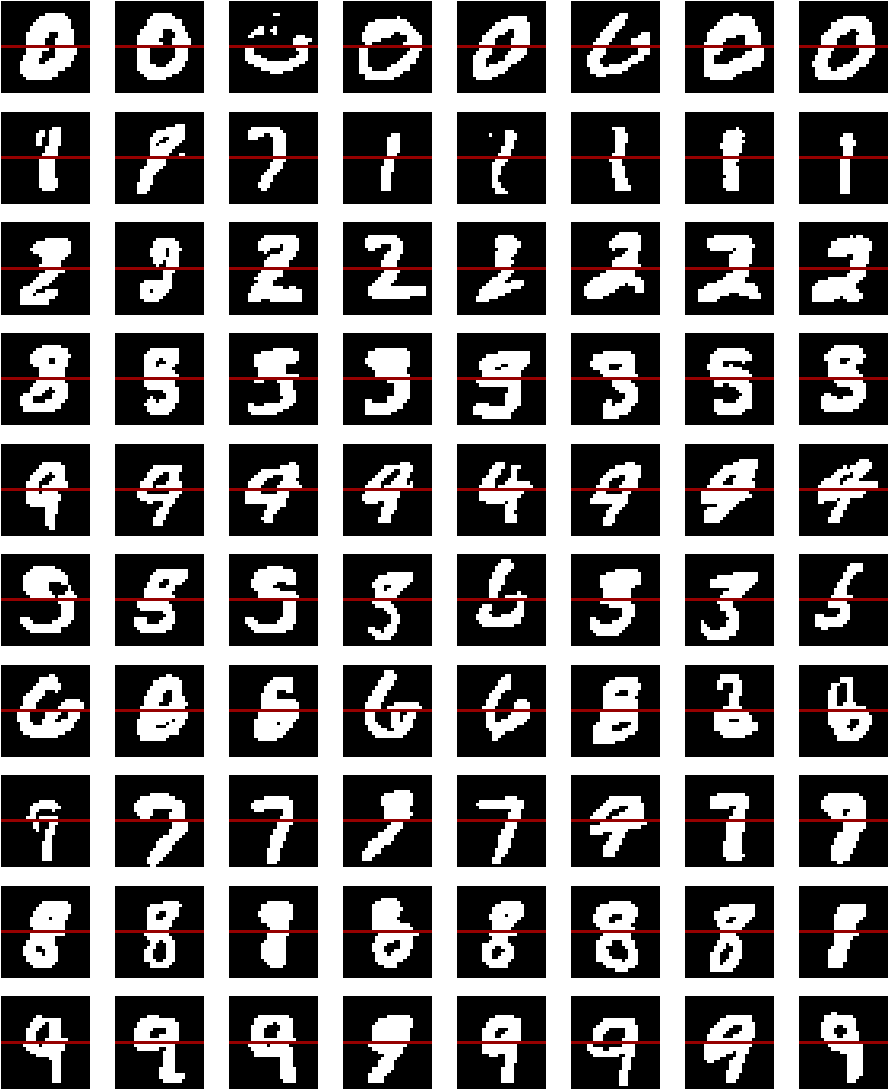}
         \caption{No correlation learnt, but estimated empirically after training}
        \label{fig:no_c_digits2}
     \end{subfigure}
     \hspace{2em}
\begin{subfigure}[t]{0.4\textwidth}
         \centering
\includegraphics[width=\textwidth]{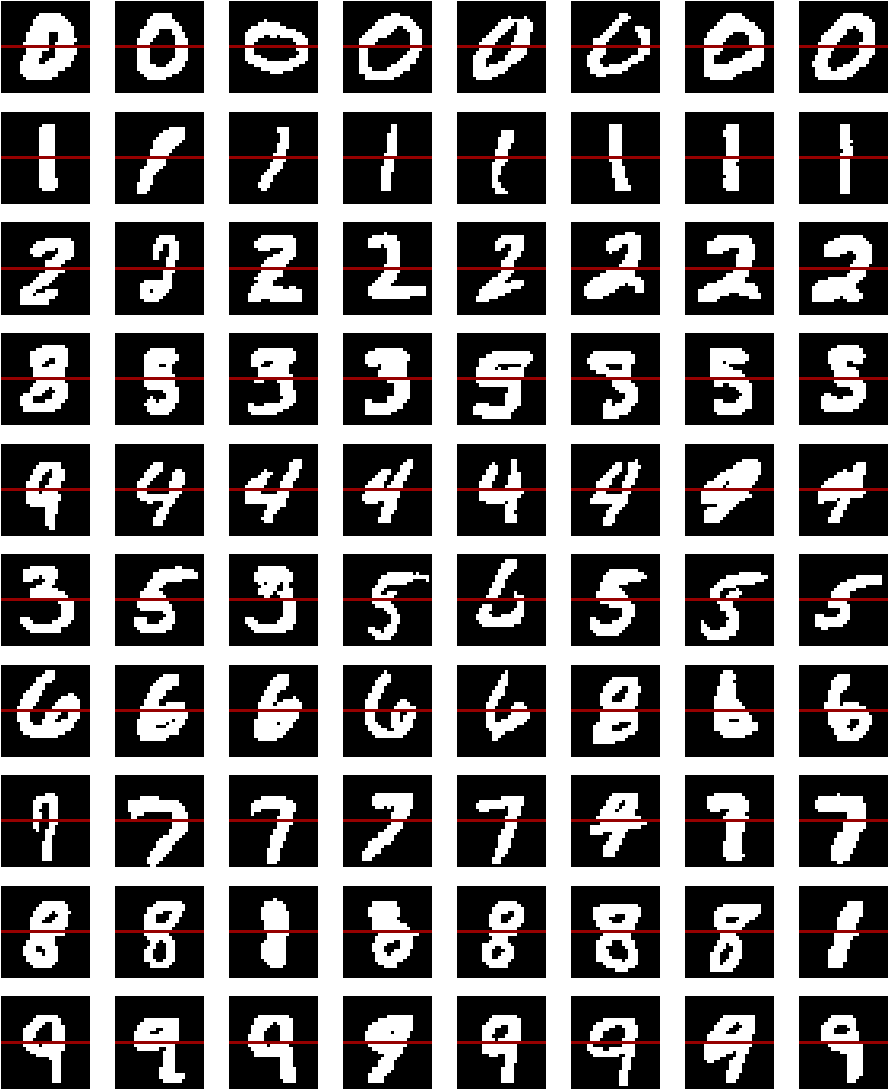}
    \caption{Correlation learnt natively}
    \label{fig:digits_comp2}
     \end{subfigure}
        \caption{Imputation of the top half of MNIST digits (view 1 of the data) using the bottom half of the image (view 2) on a JPVAE model trained with (a) independent priors (completely separate VAEs) and (b) a joint prior with learnt correlation structure between latent spaces. %Including correlation structure in the loss function improves ability to impute views; 
        The cross entropy loss between true top half of image and imputation is 117.8 in (a) and 100.2 in (b).}
        \label{fig:digits_v1}
\end{figure}

\begin{figure}[!h]
\centering
     \begin{subfigure}[t]{0.4\textwidth}
         \centering
         \includegraphics[width=\textwidth]{figures/images_noC_v1.png}
         \caption{No correlation learnt, but estimated empirically after training}
        \label{fig:no_c_digits2}
     \end{subfigure}
     \hspace{2em}
\begin{subfigure}[t]{0.4\textwidth}
         \centering
\includegraphics[width=\textwidth]{figures/images_C_imputev1.png}
    \caption{Correlation learnt natively}
    \label{fig:digits_comp2}
     \end{subfigure}
        \caption{Additional realisation of an imputation of the top half of MNIST digits (view 1 of the data) using the bottom half of the image (view 2) on a JPVAE model trained with (a) independent priors (completely separate VAEs) and (b) a joint prior with learnt correlation structure between latent spaces. %Including correlation structure in the loss function improves ability to impute views; 
        The cross entropy loss between true top half of image and imputation is 114.3 in (a) and 104.1 in (b).}
        \label{fig:digits_v1_2}
\end{figure}

\clearpage
\pagebreak
\newpage

\begin{figure}[!h]
    \centering
    \includegraphics[width=0.9\linewidth]{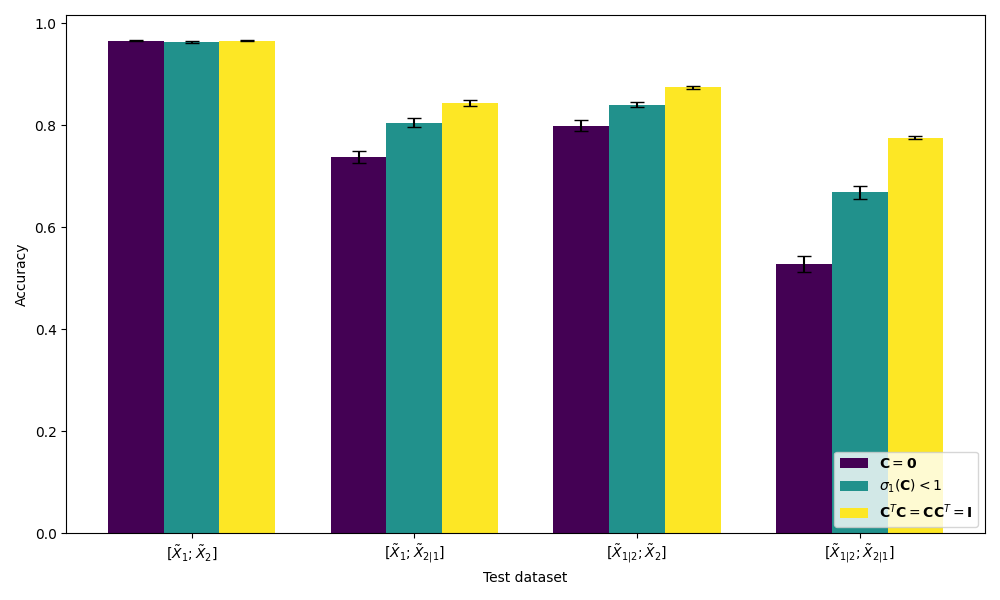}
    \caption{Results for $[\boldsymbol{Y}_1;\boldsymbol{Y}_2]$ represent classification accuracy \% for model trained on the training split of $[\tilde{\boldsymbol{X}}_1;\tilde{\boldsymbol{X}}_2]$ and tested on the test split of $[\boldsymbol{Y}_1;\boldsymbol{Y}_2]$ (the column wise concatenation of $\boldsymbol{Y}_1$ and $\boldsymbol{Y}_2$). Accuracy for $[\boldsymbol{X}_1;\boldsymbol{X}_2]$ with standard deviation in brackets is $98.04\% \ (0.074)$. Error bars present +/- one standard deviation.}
    \label{fig:concatenation_accuracy}
\end{figure}

\begin{figure}[!h]
    \centering
    \includegraphics[width=0.9\linewidth]{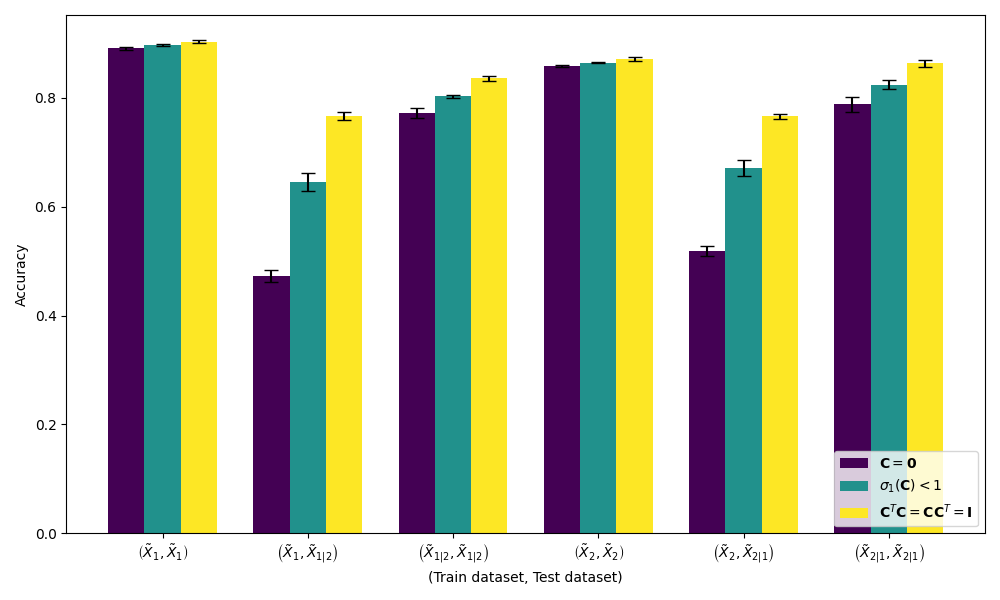}
    \caption{Results for $(\boldsymbol{Y},\boldsymbol{Z})$ represent classification accuracy \% for model trained on the training split of $\boldsymbol{Y}$ and tested on the test split of $\boldsymbol{Z}$. Accuracies for $(\boldsymbol{X}_1,\boldsymbol{X}_1)$ and $(\boldsymbol{X}_2,\boldsymbol{X}_2)$ with standard deviation in brackets are $93.59\% \ (0.25)$  and  $90.83\% \ (0.23)$ respectively. Error bars present +/- one standard deviation.}
    \label{fig:overall_accuracy}
\end{figure}

\end{document}